\def\eqref#1{equation~\ref{#1}}
\def\Eqref#1{Equation~\ref{#1}}
\def\1{\bm{1}}
\DeclareMathAlphabet{\mathsfit}{\encodingdefault}{\sfdefault}{m}{sl}
\SetMathAlphabet{\mathsfit}{bold}{\encodingdefault}{\sfdefault}{bx}{n}
\newcommand{\E}{\mathbb{E}}
\renewcommand{\P}{\mathbb{P}}
\newcommand{\R}{\mathbb{R}}
\newcommand{\Cov}{\mathrm{Cov}}
\newcommand{\calN}{\mathcal{N}}
\newcommand{\calK}{\mathcal{K}}
\providecommand{\bydef}{\overset{\text{def}}{=}}
\newtheorem{theorem}{Theorem}
\newtheorem{lemma}{Lemma}
\title{Bradley–Terry and Multi‑Objective Reward Modeling Are Complementary}
\author{%
Zhiwei Zhang\textsuperscript{1}\thanks{Work done during internship at Amazon.}, Hui Liu\textsuperscript{2}, Xiaomin Li\textsuperscript{3}, Zhenwei Dai\textsuperscript{2}, Jingying Zeng\textsuperscript{2}, Fali Wang\textsuperscript{1}, Minhua Lin\textsuperscript{1}, \\
\textbf{Ramraj Chandradevan}\textsuperscript{2}, \textbf{Zhen Li}\textsuperscript{2}, \textbf{Chen Luo}\textsuperscript{2}, \textbf{Xianfeng Tang}\textsuperscript{2}, \textbf{Qi He}\textsuperscript{2}, \textbf{Suhang Wang}\textsuperscript{1}\\
\centerline{\textsuperscript{1}The Pennsylvania State University \quad
\textsuperscript{2}Amazon \quad
\textsuperscript{3}Havard University}
}
\begin{document}

\maketitle

\begin{abstract}
Reward models trained on human preference data have demonstrated strong effectiveness in aligning Large Language Models (LLMs) with human intent under the framework of Reinforcement Learning from Human Feedback (RLHF). However, RLHF remains vulnerable to reward hacking, where the policy exploits imperfections in the reward function rather than genuinely learning the intended behavior. 
Although significant efforts have been made to mitigate reward hacking, they predominantly focus on and evaluate in-distribution scenarios, where the training and testing data for the reward model share the same distribution.
In this paper, we empirically show that state-of-the-art methods struggle in more challenging out-of-distribution (OOD) settings. We further demonstrate that incorporating fine-grained multi-attribute scores helps address this challenge. However, the limited availability of high-quality data often leads to weak performance of multi-objective reward functions, which can negatively impact overall performance and become the bottleneck. To address this issue, we propose a unified reward modeling framework that jointly trains Bradley-Terry (BT) single-objective and multi-objective regression-based reward functions using a shared embedding space. We theoretically establish a connection between the BT loss and the regression objective and highlight their complementary benefits. Specifically, the regression task enhances the single-objective reward function’s ability to mitigate reward hacking in challenging OOD settings, while BT-based training improves the scoring capability of the multi-objective reward function, enabling a 7B model to outperform a 70B baseline.
Extensive experimental results demonstrate that our framework significantly improves both the robustness and the scoring performance of reward models.
\end{abstract}

\section{Introduction}

Pretrained large language models perform exceptionally well on a wide range of tasks, including language understanding \cite{chang2024survey, nam2024using}, text generation \cite{zhao2023survey, wang2024comprehensive}, code synthesis \cite{jiang2024survey, nguyen2025codemmlu}, and decision‑making \cite{ye2025rational, lin2025far}. However, strong task performance alone does not ensure that these models behave safely or in alignment with human values \cite{dai2024safe, li2025data, shen2023large, zhang2025catastrophic}. To address these concerns, two main alignment methods have been developed. Supervised fine‑tuning (SFT) adjusts a base model using human‑curated prompt–response pairs to shape its behavior directly. RLHF \cite{bai2022training, ouyang2022training} follows a two‑step process: first, a proxy reward model is trained on human preference data to capture desired outcomes; second, the model is optimized against this reward using algorithms like PPO \cite{schulman2017proximal}, RLOO \cite{ahmadian2024back}, or GRPO \cite{shao2024deepseekmath}. 
By decoupling reward learning from policy learning, RLHF can leverage vast amounts of unlabeled data and generalize alignment to novel inputs, enhancing both safety and overall model capabilities.

However, RLHF is susceptible to reward hacking \cite{gao2023scaling, yang2024regularizing}, wherein the policy discovers shortcuts that maximize proxy rewards, such as by producing repetitive or formulaic content, without genuinely advancing the intended behaviors. Similar issues also arise in inference-time strategies like Best-of-N (BoN) sampling~\cite{gulcehre2023reinforced, dong2023raft, gui2024bonbon}.
Prior research has investigated several directions to address this issue. 
One line of work focuses on improving the reward function through ensemble methods \cite{coste2024reward, eisenstein2023helping, zhang2024improving, yan2024reward, rame2024warm, zhang2024mitigating}. However, they often require training multiple reward models, making them resource-intensive and less practical for real-applications. 
Another line investigates constrained policy optimization~\cite{moskovitz2024confronting, zhang2024mitigating, liu2024provably, zhang2024overcoming, laidlaw2024correlated}, but performance is often unstable due to sensitivity to hyperparameter tuning.
ODIN~\cite{chen2024odin} trains separate reward functions for quality and length; however, our results (Fig.~\ref{pre-ppo}) show that using length alone as a biasing factor fails to prevent reward hacking.
More recently, GRM\cite{yang2024regularizing, dai2025mitigating} incorporates text generation regularization into reward modeling and outperforms prior methods. However, the conflicting objectives of reward modeling and text generation cause training instability and sensitivity to the balancing weight (Appendix~\ref{appendix:hyperandinstability}). 
Moreover, most existing studies focus on in-distribution evaluations, and the effectiveness of these methods in out-of-distribution (OOD) settings, where training and test data come from different distributions, remains unexplored.

In this study, we first demonstrate through experiments that state-of-the-art methods fail when prompts used during PPO and BoN are drawn from a distribution different from the training data. This highlights a critical limitation in the generalization ability of current reward models under OOD settings.
We hypothesize that a BT model trained only on chosen/rejected labels remains biased and cannot distinguish fine‑grained quality differences. Inspired by recent multi‑objective reward modeling methods \cite{wang2024helpsteer, wang2024interpretable}, which leverage annotations for attributes like helpfulness, verbosity, and correctness, we examine their potential to mitigate reward hacking. 
By learning from multi‑dimensional supervision, multi-objective reward models (MORMs) capture nuanced distinctions in response quality and compel policy models to improve across all attributes simultaneously, making it harder to generate low‑quality outputs that nonetheless score highly (Sec. \ref{hacking}). 
While prior work focused on interpretability and steerability, the robustness of MORMs in policy learning remains underexplored.

Despite their potential, the performance of MORMs is constrained by the limited availability of large-scale, high-quality annotated data \cite{wang2024helpsteer, wang2024interpretable}.
This limitation arises either from the low-quality annotations produced by LLM-as-Judge \cite{cui2023ultrafeedback, kim2023prometheus, li2024generation, gu2024survey} or from the high-quality annotations produced by humans that are difficult to scale \cite{wang2024helpsteer2}.
A more detailed discussion of the data availability challenge is provided in Appendix \ref{appendix:motivationformultienhancing}.
Consequently, their scoring performance often falls behind that of single-objective reward models (SORMs) \cite{lambert2024rewardbench, liu2025rmbench}, which are typically trained on large-scale preference datasets with chosen/rejected labels that are easier to collect. 
A promising approach is to train a strong SORM and complement it with a MORM to enhance robustness against reward hacking. 
Though our empirical results in Sec.~\ref{hacking} verify its promise, this approach faces two key issues: \textbf{(1)} it requires two independent inference passes, making the process computationally expensive in practice; and \textbf{(2)} the weaker performance of the MORM directly degrades the quality of the aggregated output, thereby becoming a bottleneck in the overall system, as shown in Fig. \ref{pre-ppo} (d).

Thus, in this paper, we study a novel problem: \textit{how to efficiently mitigate reward hacking in challenging OOD settings using fine-grained attribute scores, without additional costly multi-attribute preference data?}
To address this problem, we propose a simple, yet effective and theoretically grounded reward modeling framework, termed the Joint Single and Multi-Objective Reward Model (\textbf{SMORM}). 
The proposed SMORM framework addresses the first challenge by requiring only a single forward pass through a shared backbone. We theoretically establish the connection between the commonly used Bradley–Terry \cite{bradley1952rank} loss for training the SORM and the regression loss used for training the MORM. Our theoretical analysis and extensive empirical results demonstrate that: \textbf{(1)} training the multi-objective head refines the embedding space such that the representations capture quality distinctions across multiple attributes, thereby \textit{enhancing the generalizability of the single-objective head and improving its robustness against reward hacking}.; and \textbf{(2)} training the single-objective head helps correct the positioning of responses in the embedding space, \textit{enabling the multi-objective head to perform competitively even with limited data}. Overall, the joint training of both heads over a shared embedding space leads to \textbf{complementary benefits}.
Using the same multi-objective dataset, SMORM enables a 7B model to outperform a 70B baseline model.
Importantly, SMORM training is flexible in that the prompt-response pairs used to train the two heads do not need to be identical.

Our primary contributions are:
\textbf{(1)} We empirically demonstrate that SOTA reward models suffer from reward hacking during PPO in OOD settings—a critical issue largely overlooked in prior work.
\textbf{(2)} We propose SMORM, an effective and theoretically grounded method to mitigate this challenge.
\textbf{(3)} We theoretically and empirically establish a connection between Bradley–Terry reward modeling and multi-objective regression-based reward modeling, revealing their complementary benefits.
\textbf{(4)} SMORM also tackles the key challenge of improving the performance of multi-objective reward model without requiring additional, hard-to-obtain annotated data.




\section{Background} 

\noindent\textbf{Bradley–Terry Single-Objective Reward Modeling.} Single-objective reward modeling typically builds on the Bradley-Terry framework \cite{bradley1952rank}, which distinguishes between a chosen response $y_c$ and a rejected response $y_r$ for a given prompt $x$. This is achieved by optimizing the following loss function:
\begin{equation}
\label{btloss}
{\min}_{\theta}\mathcal{L}_{\text{reward}}(\theta) = -\mathbb{E}_{(x,y_c,y_r) \sim \mathcal{D}_s} \left[ \log \left( \sigma \left( r_{\theta}(x, y_c) - r_{\theta}(x, y_r) \right) \right) \right],
\end{equation}
where $r_{\theta}$ is the reward model parameterized by $\theta$, $r_{\theta}(x, y)$ denotes the reward score assigned by $r_{\theta}$ for the output $y$ given the prompt $x$, and $\sigma(\cdot)$ is the sigmoid function. Minimizing this loss encourages the model to assign higher scores to outputs that are preferred by humans. 

\noindent\textbf{Multi-Objective Reward Modeling.}
In many practical settings, evaluating language model outputs requires considering multiple aspects such as correctness, coherence, and verbosity. Single reward signals often fail to capture this complexity. To address this, multi-objective reward models~\cite{wang2024helpsteer, wang2024interpretable, wang2023helpsteer, wang2024arithmetic} generate separate reward signals for different response attributes. The model is trained as:
\begin{equation}
{\min}_\psi \mathcal{L}(\psi) = \mathbb{E}_{(x, y, \mathbf{r})\sim\mathcal{D}_M} \left\| R_{\psi}(x, y) - \mathbf{r} \right\|_2^2,
\end{equation}
where $R_{\psi}$ is the multi-objective reward model and $\mathbf{r} \in \mathbb{R}^K$ denotes attribute scores (e.g., correctness, verbosity). Each dimension reflects a specific quality, enabling more interpretable and steerable evaluations. However, existing work has not examined how such models relate to reward hacking.

\noindent\textbf{Best-of-n Sampling (BoN).} Given an input $x$, BoN first draws a set \(\mathcal{Y}_{\mathrm{gen}}\) of \(n\) candidate outputs from the policy model and then selects the one that maximizes the reward‐model score. It can be applied either to improve outputs at inference time or to drive an iterative optimization procedure~\cite{gulcehre2023reinforced, dong2023raft, gui2024bonbon}:
\begin{equation}\label{eq:BoN}
y_{\mathrm{BoN}}(x) = \arg\max_{y \in \mathcal{Y}_{\mathrm{gen}}} r_\theta(x, y).
\end{equation}
\noindent\textbf{Proximal Policy Optimization (PPO).} PPO is a widely adopted method for RLHF in optimizing language models \cite{ouyang2022training, stiennon2020learning, wu2024pairwise}. Using a proxy reward model $r_\theta$, PPO refines the policy model $\pi_\phi$ by maximizing its score under the proxy reward while incorporating a KL divergence penalty:
\begin{equation}
    {\max}_{\phi} \mathcal{L}_{\text{RLHF}}(\phi) := \mathbb{E}_{x \sim S} \left[ \mathbb{E}_{y \sim \pi_\phi(\cdot|x)} \left( r_{\theta}(x, y) \right) - \lambda \cdot \mathrm{KL}\left( \pi_\phi(\cdot|x) \,\|\, \pi_\phi^{\text{ref}}(\cdot|x) \right) \right].
\end{equation}
Here, $S$ is a training set of prompts and $\lambda \ge 0$ is a KL regularization coefficient that controls how much $\pi_\phi$ deviates from the initial policy $\pi_\phi^{\text{ref}}$.


\noindent\textbf{Reward Hacking.} 
Reward hacking arises when a policy model exploits flaws in the reward function, achieving high scores by overfitting to spurious patterns rather than fulfilling the intended task \cite{fu2025reward, yang2024regularizing, gao2023scaling}. 
In PPO, this issue manifests when the policy gains higher scores from a proxy reward model but performs worse on a more reliable, human-aligned golden reward model. 
Further discussion of related work and comparisons to existing approaches is provided in Appendix~\ref{appendix:related}.


\section{SMORM: A Method to Mitigate Reward Hacking in an OOD Setting}
\label{hacking}
In this section, we conduct experiments to evaluate the robustness of existing SOTA methods and our proposed framework against reward hacking in both PPO and BoN under OOD settings, a more challenging setting largely overlooked in prior work \cite{yang2024regularizing, coste2024reward, eisenstein2023helping, zhang2024improving, yan2024reward, rame2024warm, dai2025mitigating}.
\begin{figure}[t]
    \centering
    \includegraphics[width=0.97\linewidth]{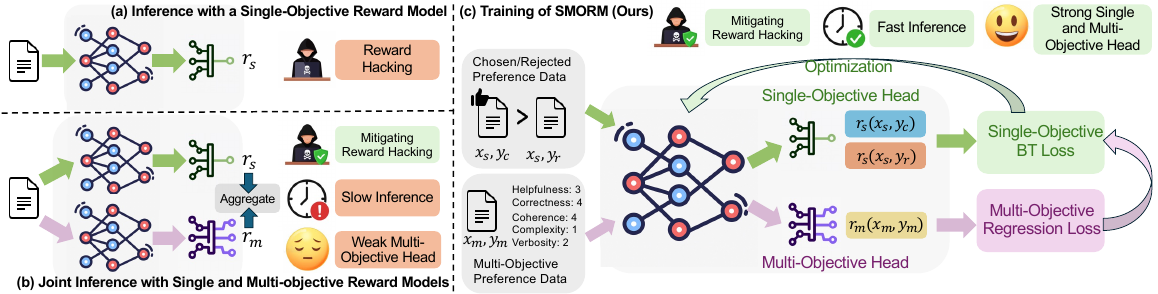}
    \caption{Illustration of SMORM training and its advantages over baseline methods.}
    \label{fig:framework}
\end{figure}
Intuitively, a multi-objective reward model (MORM) evaluates response quality across multiple dimensions, forcing the policy to balance various attributes during generation. For example, it cannot maximize the overall score by optimizing helpfulness alone while neglecting verbosity.
However, the performance of MORM is limited by the need for large-scale, high-quality multi-attribute labels, which are typically more difficult to collect than binary chosen/rejected labels \cite{wang2024helpsteer2}.
This motivates us to design an ensemble framework that complements the stronger SORM with a MORM to enhance robustness against reward hacking, as shown in Fig. \ref{fig:framework} (b).
However, this framework encounters two major issues: (1) it requires two separate inference passes, leading to significant computational overhead; and (2) the weaker performance of the MORM degrades the quality of the aggregated output, becoming a bottleneck for the system, as verified in Fig. \ref{pre-ppo}.

To overcome these limitations, we introduce a simple yet effective, theoretically grounded reward modeling framework called the Single and Multi-Objective Reward Model (\textbf{SMORM}). As shown in Fig. \ref{fig:framework} (c), SMORM jointly trains a BT single- and multi-objective reward function based on a shared embedding space. It addresses the issue of efficiency by requiring only a single forward pass through the backbone. Furthermore, training the single-objective function along the chosen/rejected dimension helps shape the embedding space and correct the positioning of samples within it, thereby enabling the multi-objective head to achieve competitive performance with limited data. We will theoretically and empirically justify this effect in Section~\ref{scoring}.

Specifically, given a pre-trained decoder-only LLM without the original output linear layer as the feature extractor $f_\theta$. We pass $x \oplus y$, the concatenation of $x$ and $y$, through the decoder layers and take the hidden state of the final decoder layer as a $d$-dimensional feature. 
On top of \( f_\theta \), we attach two linear heads: a single-objective head with weights \( \mathbf{w}_S \in \mathbb{R}^{d \times 1} \), which outputs a scalar rating, and a multi-objective head with weights \( \mathbf{w}_M \in \mathbb{R}^{d \times k} \), which produces a \( k \)-dimensional vector of attribute scores.
Given $\mathcal{D}_S=\{x_s,y_c,y_r\}$ as the chosen-rejected preference dataset and $\mathcal{D}_M=\{x_m,y_m,\mathbf{r}\}$ as the multi-attribute preference dataset, SMORM is trained with the following loss function:
\begin{equation}
\min_{\theta, \mathbf{w}_S, \mathbf{w}_M} \, -\mathbb{E}_{\mathcal{D}_S} \left[\log\sigma(\mathbf{w}_S^\top(( f_\theta(x_s, y_c)-f_\theta(x_s , y_r))))\right]  +
\mathbb{E}_{\mathcal{D}_M} \left\| \mathbf{w}_M^\top f_\theta(x_m , y_m) - \mathbf{r} \right\|_2^2
\end{equation} 
The first head with weight $\mathbf{w}_S$ outputs a score along the chosen/rejected dimension, while the second head with weight $\mathbf{w}_M $ outputs scores on multiple attributes (e.g. helpfulness, coherence and verbosity). 
SMORM supports multiple inference strategies: SMORM-F uses only the first head to produce the reward score, SMORM-L computes the mean of the scores from the second head, and SMORM-M averages the scores from both the first and second heads. 


\noindent\textbf{Experimental Setup.} 
For training SMORM, we use \texttt{Skywork80K} \cite{liu2024skywork} as $\mathcal{D}_S$ and \texttt{Helpsteer2} \cite{wang2024helpsteer2} as $\mathcal{D}_M$. 
To train the multi-attribute head using only label information without extra data or domain knowledge, we filter $\mathcal{D}_M$ to retain only samples that also appear in $\mathcal{D}_S$.
We compare \texttt{SMORM} against several baselines, including: 
(1) \textit{Baseline Classifier}, trained using the original reward loss defined in Eq.~\ref{btloss}; 
(2) \textit{ODIN}, which trains two separate reward functions for quality and length \citep{chen2024odin}.
(3) \textit{Baseline SM}, which trains a baseline SORM and MORM separately and aggregates their results during inference
and (4) \textit{GRM} \cite{yang2024regularizing}, which incorporates supervised fine-tuning (SFT) regularization.
We use \texttt{gemma-2B-it} \cite{team2024gemma} as the backbone model for all methods.
In PPO experiment following \cite{yang2024regularizing}, we downsample 20K samples from the \texttt{Unified-Feedback} dataset to optimize the PPO policy, reserving an additional 1K samples as a held-out test set for evaluation. 
Following prior work \cite{gao2023scaling, coste2023reward, yang2024regularizing}, we perform BoN sampling on this evaluation set, selecting the best of $n$ responses per prompt based on proxy model scores. The selected responses are then evaluated using the gold reward model, and gold scores are averaged across all prompts to assess true quality. We vary KL divergence from 0 to 5 by adjusting $n$ from 1 to 405, using the relation $\mathrm{KL}_{\mathrm{BoN}} = \log n - \frac{n - 1}{n}$. 
For both experiments, \texttt{gemma-2B-it} serves as the policy model, and the gold reward model\footnote{\href{https://huggingface.co/Ray2333/reward-model-Mistral-7B-instruct-Unified-Feedback}{Ray2333/reward-model-Mistral-7B-instruct-Unified-Feedback}} is a 7B human-preference model fine-tuned on the full \texttt{Unified-Feedback} dataset.
\begin{figure}[ht]
    \centering
    \includegraphics[width=\linewidth]{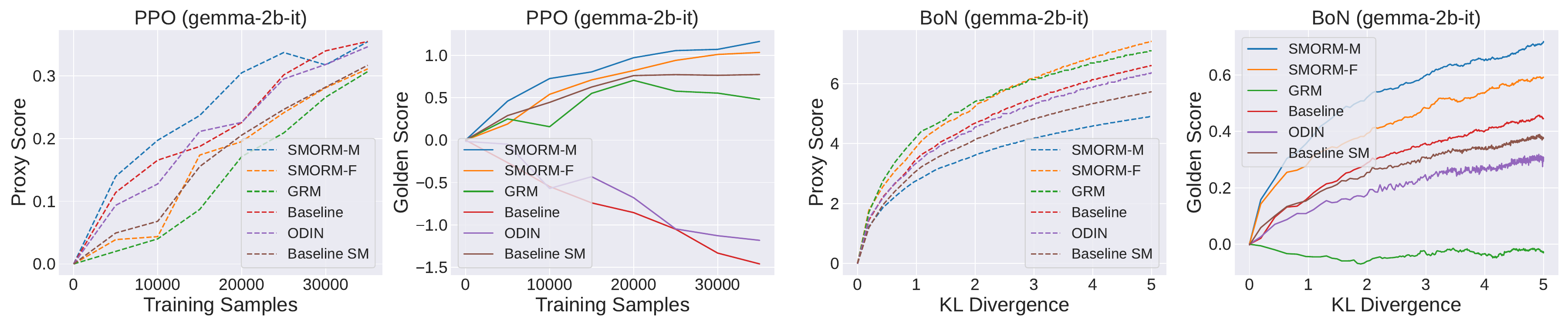}
    \begin{minipage}{0.24\textwidth}
        \small
        \centering
        \textbf{(a)}
    \end{minipage}%
    \begin{minipage}{0.24\textwidth}
        \small
        \centering
        \textbf{(b)}
    \end{minipage}
    \begin{minipage}{0.25\textwidth}
        \small
        \centering
        \textbf{(c)}
    \end{minipage}
    \begin{minipage}{0.25\textwidth}
        \small
        \centering
        \textbf{(d)}
    \end{minipage}
    \caption{Proxy and gold scores from (a)(b) PPO and (c)(d) BoN experiments under the OOD setting, using gemma-2B-it as the base model. All rewards are normalized to start from 0.}
    \label{pre-ppo}
\end{figure}


The results of the PPO and BoN experiments are presented in Fig.~\ref{pre-ppo} (a)(b) and (c)(d), respectively. From these figures, we observe the following:
\textbf{(1) Baselines Fail to Mitigate Reward Hacking.} We observe that although GRM demonstrates increased golden scores during PPO, its average gold scores decrease as KL divergence increases in the BoN setting, suggesting reward hacking. ODIN shows a modest increase in golden scores under BoN, but a decrease during PPO, indicating that addressing length alone as a bias is insufficient to mitigate reward hacking effectively.
\textbf{(2) Incorporating Multi-Attribute Scores Helps Mitigate Reward Hacking in OOD Setting.} In both experimental settings, we find that the Baseline SM outperforms both GRM and ODIN, exhibiting a steady increase in golden scores as the proxy score increases.
\textbf{(3) Weak Multi-Objective Reward Functions Become a Bottleneck.} While Baseline SM shows promise, its performance in the BoN experiment is even worse than that of the baseline single-objective reward function. This suggests that weak multi-objective reward function can be detrimental and become a bottleneck in the overall system.
\textbf{(4) SMORM Shows Superior Performance.} Both SMORM-F and SMORM-M significantly outperform all baselines across both experiments. Notably, SMORM-F performs comparably to SMORM-M.
To theoretically justify this phenomenon, we propose the following theorem:

\begin{theorem}[Implicit Multi-Attribute Effect]
\label{prop:implicit_multi_effect_detailed}
Let a reward model be trained under the SMORM framework, and suppose the following conditions hold:
(1) Bounded features: There exists $B<\infty$ such that $\|f_\theta(x,y)\|\le B$ for every $(x,y)$.
(2)
Positive-definite covariances: let
$
  f_c=f_\theta(x_s,y_c),\;
  f_r=f_\theta(x_s,y_r),\;
  f_m =f_\theta(x_m,y_m).$
        \(
          \Sigma_S:=\E_{\mathcal{D}_S}[(f_c-f_r)(f_c-f_r)^\top]
          \;
          \text{and}
          \;
          \Sigma_M:=\E_{\mathcal{D}_M}[f_m\,f_m^\top]
        \)
        are positive-definite matrices.
(3) Positive correlation: Let
        $\mu_S:=\E_{(x_s,y_c,y_r)\sim\mathcal{D}_S}[f_\theta(x_s,y_c)-f_\theta(x_s,y_r)]$
        and let  
        $C_M:=\E_{(x_m,y_m,r)\sim\mathcal{D}_M}[f_\theta(x_m,y_m)\,\mathbf{r}^\top]\in\R^{d\times K}$.
        Then $\alpha:=\mu_S^\top\!\Sigma_M^{-1}C_M\ \text{has non-negative sum, i.e.}\;
            \mathbf 1^\top\alpha \;\ge\;0.$
As the optimization of both reward heads converge to their population minimizers, there exist constants $c=\frac{\mathbf{1}^\top\alpha}{K\,\bigl(\mu_S^\top\,\Sigma_S^{-1}\,\mu_S\bigr)}$ and $\varepsilon\ge0$—depending only
on $B$ and second-order moments—such that for every pair \((x,y)\):
\begin{equation}
\resizebox{0.83\linewidth}{!}{$
r_m(x,y)\;=\;\frac1K\sum_{i=1}^K w_{M,i}^\top f_\theta(x,y)
  \;\;\ge\;\;
  c\,\bigl(w_S^\top f_\theta(x,y)\bigr)\;-\;\varepsilon=cr_s(x,y)\;-\;\varepsilon.
  $}
\end{equation}
\end{theorem}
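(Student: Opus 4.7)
The plan is to characterise the population minimisers of both heads in the shared embedding space and then compare the two resulting linear functionals of $f_\theta(x,y)$. First I would tackle the regression head, which is strictly convex quadratic in $w_M$ given $\theta$; by condition~(2), $\Sigma_M$ is positive definite, so the normal equations yield the closed form $W_M^{*}=\Sigma_M^{-1}C_M$. Averaging across the $K$ columns gives
\[
r_m(x,y)\;=\;\tfrac{1}{K}\,\mathbf{1}^\top W_M^{*\top}\,f_\theta(x,y)\;=\;\tfrac{1}{K}\,\mathbf{1}^\top C_M^\top\Sigma_M^{-1}\,f_\theta(x,y).
\]

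The second step is the BT head, which lacks a closed form because of the sigmoid. The population BT loss is smooth and strictly convex in $w_S$ under condition~(2), so the minimiser $w_S^{*}$ is unique and satisfies $\E_{\mathcal{D}_S}[\sigma(-w_S^{*\top}\Delta)\Delta]=0$ with $\Delta=f_c-f_r$. I would linearise the loss at $w_S=0$ via a Newton step: using $\nabla\mathcal{L}(0)=-\tfrac{1}{2}\mu_S$ and $\nabla^2\mathcal{L}(0)=\tfrac{1}{4}(\Sigma_S+\mu_S\mu_S^\top)$, a Sherman-Morrison calculation shows $w_S^{*}$ is aligned with $\Sigma_S^{-1}\mu_S$ up to a residual $\delta$ whose norm is controlled by higher-order derivatives of $\sigma$ and the feature bound $B$ from condition~(1). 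Thus there exists a positive scalar $\beta$ such that
\[
r_s(x,y)\;=\;\beta\,\mu_S^\top\Sigma_S^{-1}\,f_\theta(x,y)\;+\;\eta(x,y),
\]
where $|\eta(x,y)|$ is uniformly bounded by a constant depending only on $B$ and second-order moments of $\Delta$.

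Combining the two expressions, I would form $r_m(x,y)-c\,r_s(x,y)$ with $c=\mathbf{1}^\top\alpha/[K(\mu_S^\top\Sigma_S^{-1}\mu_S)]$. Both heads place an inner-product loading on the direction $\Sigma_S^{-1}\mu_S$: $r_m$ contributes $\tfrac{1}{K}\mathbf{1}^\top\alpha$ to the feature $\mu_S$, while the leading term of $r_s$ contributes $\beta(\mu_S^\top\Sigma_S^{-1}\mu_S)$. The chosen $c$, together with the absorbable scalar $\beta$, is precisely what eliminates the leading preference-direction coefficient in $r_m-c\,r_s$. Condition~(3) ensures $\mathbf{1}^\top\alpha\ge0$, so $c\ge0$. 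What remains is a linear functional of $f_\theta(x,y)$ whose coefficient vector has no component along $\Sigma_S^{-1}\mu_S$; by Cauchy-Schwarz and $\|f_\theta(x,y)\|\le B$, its magnitude is majorised by a nonnegative constant $\varepsilon$ depending only on $B$ and on second-order moments of $\mathcal{D}_S$ and $\mathcal{D}_M$.

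The main obstacle I anticipate is justifying the BT linearisation uniformly in $(x,y)$: a naive Taylor remainder grows with $\|w_S^{*}\|$, so I would rely on condition~(1) together with the Lipschitz smoothness of $\sigma$ to show the Newton step from $0$ incurs only a bounded perturbation $\delta$ whose norm depends solely on second-order quantities. Once this is established, the one-sided inequality $r_m\ge c\,r_s-\varepsilon$ follows by absorbing all bounded residuals, including the $\beta$-rescaling and the orthogonal term, into $\varepsilon$. The positive-correlation hypothesis~(3) is essential: it rules out a destructive $c<0$ and ensures the bound is informative rather than vacuous.
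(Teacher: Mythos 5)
Your proposal is correct in outline and follows the same architecture as the paper's proof: closed-form normal equations for the regression head ($W_M^*=\Sigma_M^{-1}C_M$), identification of the BT head's direction with $\Sigma_S^{-1}\mu_S$, projection of the (whitened) regression solution onto that preference direction to extract the coupling $\mathbf{1}^\top\alpha$, and absorption of the orthogonal remainder into $\varepsilon$ via Cauchy--Schwarz and the feature bound $B$. The one genuine divergence is how the BT direction is obtained. The paper sidesteps the lack of a closed form by replacing the logistic loss with the squared loss $\E_S[(\mathbf{w}_S^\top(f_c-f_r)-1)^2]$ and invoking a surrogate-equivalence claim that the minimiser's direction is preserved; this pins $\mathbf{w}_S$ to exactly $\Sigma_S^{-1}\mu_S$, which is what makes the specific constant $c=\mathbf{1}^\top\alpha/[K(\mu_S^\top\Sigma_S^{-1}\mu_S)]$ cancel the leading term exactly. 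You instead keep the logistic loss and linearise via a Newton step at $\mathbf{w}_S=0$. Two remarks on that route. First, with the paper's definition $\Sigma_S=\E[\Delta\Delta^\top]$ (an uncentered second moment), the population Hessian at the origin is $\tfrac14\Sigma_S$, not $\tfrac14(\Sigma_S+\mu_S\mu_S^\top)$, so the Sherman--Morrison step is unnecessary and the Newton step gives $2\Sigma_S^{-1}\mu_S$ directly; either way you land on the right direction. Second, and more substantively, your argument only yields $\mathbf{w}_S^*=\beta\,\Sigma_S^{-1}\mu_S+\delta$ with an unknown positive scale $\beta$ and a residual $\delta$ that is bounded but not shown to vanish; with $\beta\ne1$ the stated $c$ no longer exactly cancels the leading coefficient, so you must either rescale $c$ by $1/\beta$ or fold the mismatch $(1-\beta)$-term into $\varepsilon$ along with $c\,\delta^\top f$. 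That is workable given boundedness, but it weakens the bound to one whose $\varepsilon$ is not obviously small relative to the signal --- a weakness the paper shares, since its surrogate-equivalence claim is itself only heuristic. In short: same decomposition, same key projection lemma, a more honest but not fully closed treatment of the single delicate step that both arguments ultimately leave informal.
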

We provide detailed justifications of the assumptions and the proofs in the Appendix \ref{prooftheorem1}.
The theorem is proved under the assumption that the aggregated attribute scores in the multi-objective preference data are positively correlated with the score provided by a pretrained single-objective reward function $\mathbf{w}_S$.
This is well-justified: \(\mathbf{w}_S\) can be trained on large-scale, high-quality preference data and thus provides reliable scoring. 
Moreover, multi-objective annotations typically follow the principle that chosen responses should have higher aggregated scores than rejected ones, and thus a positive correlation can be reasonably expected. 
Notably, this assumption is general and does not require the training data for the two heads to share the same prompt–response pairs or domain.

From this theorem, we derive two conclusions: 
\textbf{(1)} If a response achieves a high single-objective score \(r_s \ge \tau\), its multi-attribute average score \(r_m\) is lower-bounded by \(c\,\tau - \varepsilon\). This result implies that even when the multi-attribute head is ignored, a high single-objective score alone ensures a respectable level of fine-grained quality. 
This explains why policies trained with SMORM-F attain performance comparable to those trained with SMORM-M.
\textbf{(2)} For any two responses \(y_A\) and \(y_B\) to the same prompt \(x\), if $r_s(x, y_A) > r_s(x, y_B),$ then it follows that $c\,r_s(x, y_A) - \varepsilon > c\,r_s(x, y_B) - \varepsilon.$
That is, a higher single-objective score implies a strictly higher lower bound on the corresponding multi-attribute score $r_m$. This is desirable, as it allows us to train a strong single-objective function using abundant preference data and use it to guide the preference decisions of the multi-objective function, without requiring the costly annotation of fine-grained multi-attribute labels. The empirical results in Sec.~\ref{exp:rewardmodeling} support this conclusion.

\section{Superior Scoring Performance of SMORM}
\label{scoring} 

We have empirically and theoretically demonstrated that SMORM effectively mitigates reward hacking, even under challenging OOD scenarios. Although Theorem~\ref{prop:implicit_multi_effect_detailed} establishes that SMORM can implicitly leverage a stronger single-objective reward function to guide the multi-objective one, a potential concern remains: joint training may degrade the performance of that single-objective function. To address this concern, we begin this section by introducing a lemma that connects Bradley–Terry reward modeling with multi-objective reward regression. We then present a theorem demonstrating that SMORM improves the performance of both reward functions by learning a more effective feature extractor.
Unlike ODIN \cite{DBLP:conf/icml/Chen0CS0GHSC24}, which trains reward functions for both quality and length using the BT loss, the interaction between BT loss and MSE regression loss when sharing a common embedding space is not straightforward to characterize. Therefore, we propose the following lemma to bridge this gap. 

\begin{lemma}[Pairwise Preference Error to MSE Loss]
\label{lemma1}
Let \( y_A, y_B \) be a pair of responses. Assume \( g_s(y) \) is the ground truth score and \( r_s(y) \) is the predicted score under a Bradley–Terry model. Then:
\[
\P(y_A \succ y_B) = \sigma\big(r_s(y_A) - r_s(y_B)\big), \quad 
\P^\star(y_A \succ y_B) = \sigma\big(g_s(y_A) - g_s(y_B)\big),
\]
where \( \sigma(t) = \frac{1}{1 + e^{-t}} \). The expected preference error satisfies:
\begin{equation}
\E_{\mathcal{D}_S} \left| \P(y_A \succ y_B) - \P^\star(y_A \succ y_B) \right|
\le 
\frac{1}{4} \E_{\mathcal{D}_S} \left( \sqrt{2\, \text{MSE}(r_s)} \right),
\end{equation}
with \(\text{MSE}(r_s) = \big(r_s(y) - g_s(y)\big)^2\).
Similarly, for a multi-objective reward model with predicted score \( r_m \) and ground truth \( g_m \), let $e_m = r_m(v_A) - r_m(v_B)$ and $e_m^\star = g_m(v_A) - g_m(v_B)$. 
Then the error is bounded as:
\begin{equation}
\E_{\mathcal{D}_M} \left| e_m - e_m^\star \right| \le \E_{\mathcal{D}_M}\left(\sqrt{2\, \text{MSE}(r_m)}\right).
\end{equation}
\end{lemma}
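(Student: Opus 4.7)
The strategy is to reduce the Bernoulli preference error to a Lipschitz difference of logits and then to the pointwise reward error via the parallelogram-type inequality $(a-b)^2 \le 2(a^2+b^2)$. First I would note that $\sigma'(t) = \sigma(t)(1-\sigma(t))$ attains its maximum $1/4$ at $t=0$, so $\sigma$ is $(1/4)$-Lipschitz on $\mathbb{R}$. Applying this bound with $u = r_s(y_A)-r_s(y_B)$ and $v = g_s(y_A)-g_s(y_B)$, and writing $\Delta_y := r_s(y)-g_s(y)$ for the pointwise reward error, immediately gives $|\mathbb{P}(y_A \succ y_B) - \mathbb{P}^\star(y_A \succ y_B)| \le \tfrac{1}{4}\,|\Delta_A - \Delta_B|$, which isolates the approximation error of the reward model from the nonlinearity of the preference link.

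Next I would bound the logit gap by the pointwise errors via the inequality $(\Delta_A - \Delta_B)^2 \le 2(\Delta_A^2 + \Delta_B^2)$, which follows directly from the parallelogram identity or from Cauchy--Schwarz applied to the vector $(1,-1)$. Taking square roots yields $|\Delta_A - \Delta_B| \le \sqrt{2(\Delta_A^2 + \Delta_B^2)}$, and combining with the previous display produces a pointwise inequality; since it holds on every draw from $\mathcal{D}_S$, monotonicity of expectation (no Jensen is needed) delivers the first claim once $\text{MSE}(r_s)$ is identified with the pairwise squared deviation $\Delta_A^2+\Delta_B^2$. The multi-objective case follows the same route with two simplifications: the regression head emits attribute scores directly, so no sigmoid wrapping appears and the $1/4$ Lipschitz prefactor is absent; thus $|e_m - e_m^\star| \le \sqrt{2\,\text{MSE}(r_m)}$ is an immediate application of the same parallelogram step, applied either coordinate-wise or to the norm that $|\cdot|$ is understood to denote.

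The main subtlety---less an obstacle than a bookkeeping point---is interpreting $\text{MSE}(r_s) = (r_s(y)-g_s(y))^2$, which is written pointwise at a single $y$ while the outer expectation is over pairs $(x_s, y_c, y_r)$. I would resolve this by explicitly treating $\text{MSE}(r_s)$ evaluated at a pair as $\Delta_A^2 + \Delta_B^2$, or equivalently by assuming $y_A$ and $y_B$ share the same marginal so that pair-level and point-level second moments coincide up to a factor of two. Once that notational convention is made explicit, both halves of the lemma collapse to the Lipschitz estimate of $\sigma$ chained with a single elementary inequality, and require no further machinery.
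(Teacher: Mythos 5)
Your proposal is correct and follows essentially the same route as the paper's proof: the $\tfrac14$-Lipschitz bound on $\sigma$ followed by the elementary inequality $(\Delta_A-\Delta_B)^2\le 2(\Delta_A^2+\Delta_B^2)$ (the paper's ``Cauchy--Schwarz'' step), with the sigmoid and its prefactor dropped in the multi-objective case. Your explicit handling of the pointwise-versus-pairwise meaning of $\text{MSE}(r_s)$ is in fact more careful than the paper's own write-up, which leaves that identification implicit.
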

Lemma~\ref{lemma1} shows that by keeping the mean squared error of the predicted scores small, we automatically bound the expected pairwise preference error for both the Bradley–Terry single-objective head and the multi-attribute head. With this lemma, we propose the following theorem:  

\begin{theorem}
\label{theorem2}
Under the same assumptions as in Theorem~\ref{prop:implicit_multi_effect_detailed} and assuming that the feature extractor $f_\theta$ is differentiable, let \(\widehat{\theta}\) denote the maximum likelihood estimator (MLE) of the ground truth optimal parameter \(\theta^\star\). Let \(\widehat{\theta}_{\text{s}}\) and \(\widehat{\theta}_{\text{m}}\) denote the maximum likelihood estimators of the single- and multi-objective reward functions, respectively.
Define
\(M_S(y) = \mathbf{w}_S^\top f_{\theta^\star}(y),\) \(M_M(y) = \mathbf{w}_M^\top f_{\theta^\star}(y)\). Then, for a response \(y\), the mean squared error (MSE) of the predicted reward can be approximated as:
\[ \label{eq:thm-MSE0}
\resizebox{0.99\linewidth}{!}{$
\operatorname{MSE}_S \approx \nabla_\theta M_S(y)^{\top} \operatorname{Cov}\left( \widehat{\theta}_s \right) \nabla_\theta M_S(y) + \sigma_{00}, \operatorname{MSE}_M \approx \nabla_\theta M_M(y)^{\top} \operatorname{Cov}\left( \widehat{\theta}_m \right) \nabla_\theta M_M(y) + \sigma_{00},$}
\]
where $\sigma_{00}$ is the intrinsic randomness in the label. Moreover, SMORM yields lower asymptotic MSE for both the single- and multi-objective heads compared to training either head alone:
\begin{equation} \label{eq:thm-MSE0-inequality}
\operatorname{MSE}_S^{\text{SMORM}} 
<
\operatorname{MSE}_S^{\text{single}},\quad
\operatorname{MSE}_M^{\text{SMORM}} 
<
\operatorname{MSE}_M^{\text{multi}}
\end{equation}
\end{theorem}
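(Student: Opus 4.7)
The plan is to (i) recast both heads' losses as negative log-likelihoods, (ii) invoke standard M-estimator asymptotics to obtain the covariance of each estimator, (iii) apply the delta method to push that covariance through $M_S$ and $M_M$, and (iv) conclude by a Fisher-information additivity argument that leverages independence of the two datasets. First I would view the BT loss as the NLL of a logistic/Bernoulli model on the preference label and the MSE loss as the NLL of a Gaussian-observation model on the attribute vector (with fixed residual variance $\sigma_{00}$). Under the differentiability of $f_\theta$ and the positive-definite second-moment conditions inherited from Theorem~\ref{prop:implicit_multi_effect_detailed}, classical consistency and asymptotic-normality results give $\sqrt{n}(\widehat{\theta}-\theta^{\star})\overset{d}{\to}\mathcal{N}\!\bigl(0,\mathcal{I}(\theta^{\star})^{-1}\bigr)$ for each of $\widehat{\theta}_s$, $\widehat{\theta}_m$, and $\widehat{\theta}_{\mathrm{SMORM}}$, where $\mathcal{I}$ denotes the relevant Fisher information matrix.

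\textbf{Delta-method MSE formula.} Writing $M_S(y;\theta)=w_S^{\top}f_{\theta}(y)$ and Taylor-expanding to first order around $\theta^{\star}$ gives
\[
M_S(y;\widehat{\theta})-M_S(y;\theta^{\star})\;\approx\;\nabla_{\theta}M_S(y)^{\top}(\widehat{\theta}-\theta^{\star}),
\]
and analogously for $M_M$. Squaring, taking expectation over the estimator, and adding the irreducible label noise $\sigma_{00}$ produces the two approximation formulas stated in the theorem. Lemma~\ref{lemma1} then ties these MSE bounds back to the pairwise preference and regression errors, justifying MSE as the right quantity to compare.

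\textbf{Fisher information additivity.} Because $\mathcal{D}_S$ and $\mathcal{D}_M$ are independent draws with disjoint head parameters, the SMORM log-likelihood decomposes as a sum of two independent log-likelihoods that share only $\theta$. Consequently, after profiling out the head parameters $w_S,w_M$ via a Schur-complement identity—legitimate since the $w_S$--$w_M$ cross-Hessian block vanishes—the profile Fisher information on $\theta$ itself decomposes additively:
\[
\mathcal{I}^{\mathrm{SMORM}}_{\mathrm{prof}}(\theta^{\star})\;=\;\mathcal{I}_s^{\mathrm{prof}}(\theta^{\star})\;+\;\mathcal{I}_m^{\mathrm{prof}}(\theta^{\star}).
\]
Using the positive-definiteness of $\Sigma_S$ and $\Sigma_M$, I would show that each of $\mathcal{I}_s^{\mathrm{prof}},\mathcal{I}_m^{\mathrm{prof}}$ is positive definite, so that $\mathcal{I}^{\mathrm{SMORM}}_{\mathrm{prof}}\succ\mathcal{I}_s^{\mathrm{prof}}$ and $\mathcal{I}^{\mathrm{SMORM}}_{\mathrm{prof}}\succ\mathcal{I}_m^{\mathrm{prof}}$ in the Loewner order. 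Inverting reverses the ordering, and sandwiching with the gradients $\nabla_{\theta}M_S(y)$ and $\nabla_{\theta}M_M(y)$ yields the strict inequalities in \eqref{eq:thm-MSE0-inequality}.

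\textbf{Main obstacle.} The trickiest step will be the profile-likelihood bookkeeping: the single-head and SMORM estimators live in parameter spaces of different dimensions ($\theta,w_S$ versus $\theta,w_S,w_M$), so one has to carefully reduce both sides to an apples-to-apples comparison on the shared coordinate $\theta$. The key lever is block-orthogonality of the $w_S$- and $w_M$-blocks of the joint Hessian, which follows from the two datasets entering different score functions. A subsidiary concern is strictness of the inequality: one must verify that $\nabla_{\theta}M_S(y)$ (resp.\ $\nabla_{\theta}M_M(y)$) has a non-zero projection onto the subspace along which the added block $\mathcal{I}_m^{\mathrm{prof}}$ (resp.\ $\mathcal{I}_s^{\mathrm{prof}}$) is strictly positive, which I would argue by propagating the positive-definiteness of $\Sigma_S,\Sigma_M$ through the chain rule for $\nabla_{\theta}f_{\theta^{\star}}(y)$.
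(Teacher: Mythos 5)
Your proposal follows essentially the same route as the paper's proof: cast both heads as log-likelihoods, invoke asymptotic normality of the MLE so that the estimator covariance is the inverse Fisher information, note that the SMORM Fisher information is the sum of the two heads' (positive semi-definite) contributions and hence Loewner-larger, and push the resulting covariance ordering through a first-order Taylor expansion of $M_S$ and $M_M$ plus a bias--variance decomposition with irreducible noise $\sigma_{00}$. The only substantive difference is in the bookkeeping and the strictness step: you profile out the head parameters via a Schur complement and want to derive strict positive-definiteness of each profile block from $\Sigma_S,\Sigma_M\succ0$, whereas the paper works with the Fisher information in $\theta$ directly and obtains strictness of the added block from the positive-correlation assumption (the attribute-head gradients having positive projection onto the single-head gradient); your treatment of the dimension mismatch between the single-head and joint parameter spaces is, if anything, more careful than the paper's.
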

Detailed proofs of the above lemma and theorem are provided in Appendix~\ref{appendix:theory}. By Lemma~\ref{lemma1} and Theorem~\ref{theorem2}, a reduction in MSE directly tightens the bound on pairwise preference prediction error. Thus, SMORM training yields an asymptotically more accurate response quality estimator (in terms of MSE) than either the single-head or multi-head training procedure.
We provide an analysis of why the baseline MORM fails on RM-Bench and how SMORM improves performance in Appendix~\ref{appendix:rmbench}.

\section{Experiments}
\label{experiments}
In this section, we present a comprehensive evaluation of SMORM. We conduct experiments under both in-distribution and out-of-distribution settings to assess how SMORM improves scoring capability and mitigates reward hacking. 
Our results demonstrate the following advantages of SMORM: 
\textbf{(1) Enhancing scoring performance}. SMORM significantly improves the scoring capability of the multi-objective reward function without requiring additional multi-attribute preference data, which are often difficult to collect (Sec.~\ref{exp:rewardmodeling}).
\textbf{(2)} \textbf{Mitigating reward hacking}. SMORM substantially enhances the robustness of the single-objective reward function against reward hacking in both ID and challenging OOD settings (Sec. \ref{exp:rlhf}), while also improving its scoring performance (Sec.~\ref{exp:rewardmodeling}).
\textbf{(3) Flexible training}. SMORM does not require the single-objective dataset $\mathcal{D}_S$ and the multi-objective dataset $\mathcal{D}_M$ to share the same prompt-response pairs, allowing for a more flexible training process.

\subsection{Experimental Setup} 

\noindent\textbf{Datasets and Benchmarks.}
In Section~\ref{exp:rewardmodeling}, we use the \texttt{Unified-Feedback} dataset as the single-objective preference dataset \(\mathcal{D}_S\), one of the largest collections of pairwise human preferences. To assess robustness across data scales, we evaluate two settings: one with 400K samples from \(\mathcal{D}_S\) and another with 40K. For the multi-objective dataset \(\mathcal{D}_M\), we use \texttt{UltraFeedback}~\cite{cui2023ultrafeedback} in the 400K setup, which includes 240K GPT-annotated prompt--response pairs with fine-grained attribute scores. In the 40K setup, we use \texttt{HelpSteer2}~\cite{wang2024helpsteer2}, a high-quality human-annotated dataset with 20K samples. By varying both the size and source of \(\mathcal{D}_M\), we demonstrate that our method consistently improves multi-objective reward modeling across datasets of varying scale and quality.
SORM is trained only on \(\mathcal{D}_S\), MORM on \(\mathcal{D}_M\), and SMORM jointly on both. The trained reward models are evaluated on RewardBench \cite{lambert2024rewardbench} and RM-Bench \cite{liu2025rmbench}.
In the RLHF experiments (Section~\ref{exp:rlhf}), we consider both in-distribution (ID) and out-of-distribution (OOD) settings. For ID, we downsample 20K samples from \texttt{Unified-Feedback} as \(\mathcal{D}_S\) to train reward models and the PPO policy. For OOD, we use \texttt{Skywork80K}~\cite{liu2024skywork} as \(\mathcal{D}_S\). In both cases, \texttt{HelpSteer2} serves as \(\mathcal{D}_M\), and 1K samples from \texttt{Unified-Feedback} are reserved for policy evaluation.

\noindent\textbf{Base Models.} Following \cite{yang2024regularizing}, we adopt gemma-2B-it \cite{team2024gemma} and Mistral-7B-Instruct-v0.2 \cite{jiang2023mistral7b} as the base models for preference learning. For the RLHF experiments, gemma-2B-it serves as the policy model in both the BoN and PPO settings. The gold reward model is a 7B human preference model fine-tuned on the entire Unified-Feedback dataset.

\noindent\textbf{Baselines.}
We adopt representative and SOTA baselines:
(1) Baseline Classifier, trained using the original reward loss as defined in Eq. \ref{btloss};
(2) Margin~\cite{touvron2023llama}, which augments the original reward loss with an additional margin term;
(3) Label Smooth~\cite{wang2024secrets}, which addresses overfitting by penalizing overconfident predictions;
(4) Ensemble, which combines the outputs of three reward models by taking either the average or the minimum score \cite{coste2023reward}; and
(5) GRM\cite{yang2024regularizing} with two types of regularization: GRM w/ dpo and GRM w/ sft.
Details on baselines and implementation are in Appendix~\ref{appendix:implementation}.

\noindent\textbf{RLHF.} The training and evaluation pipeline of PPO and BoN follow the setting in Sec. \ref{hacking}.


\subsection{Evaluation on Reward Modeling}
\label{exp:rewardmodeling}
\begin{table*}[h]
\centering
\caption{Comparison of SMORM-F and baselines on RewardBench. Baselines results from \cite{yang2024regularizing}.}
\resizebox{\textwidth}{!}{%
\begin{tabular}{l|ccccc|ccccc}
\toprule
\multirow{2}{*}{\textbf{Reward model}} & \multicolumn{5}{c|}{$\mathcal{D_S}/\mathcal{D}_M$: UnifiedFeedback 400k/UltraFeedback} & \multicolumn{5}{c}{$\mathcal{D_S}/\mathcal{D}_M$: UnifiedFeedback 40k/HelpSteer2} \\
\cmidrule{2-11}
& \textbf{Chat} & \textbf{Chat-Hard} & \textbf{Safety} & \textbf{Reasoning} & \textbf{Avg} & \textbf{Chat} & \textbf{Chat-Hard} & \textbf{Safety} & \textbf{Reasoning} & \textbf{Avg} \\
\midrule
Baseline (Single)             
& 95.5 & 38.0 & 73.8 & 65.3 & 68.2
& 94.7 & 37.5 & 66.2 & 58.4 & 64.2 \\
Baseline + margin                 
& 95.8 & 38.4 & 73.9 & 72.5 & 70.2
& 97.2 & 37.5 & 56.8 & 72.7 & 66.1 \\
Label smooth           
& 94.4 & 37.3 & 73.2 & 77.4 & 70.6
& 91.6 & 39.0 & 53.8 & 60.2 & 61.1 \\
Ensemble               
& 98.0 & 37.5 & 77.3 & 71.3 & 71.0
& 96.1 & 38.2 & 58.8 & 67.6 & 65.2 \\
GRM (linear) w/ dpo 
& 96.7 & 39.0 & 76.4 & 68.5 & 70.2
& 94.7 & 38.4 & 62.5 & 51.2 & 61.7 \\
GRM (linear) w/ sft          
& 96.1 & 40.1 & 80.3 & 69.3 & 71.5
& 94.7 & 40.8 & 65.4 & 77.0 & 69.5 \\
GRM w/ dpo           
& 95.8 & 40.1 & 78.7 & 66.2 & 70.2
& 92.5 & 39.9 & 72.5 & 61.4 & 66.6 \\
GRM w/ sft                   
& 97.8 & 42.1 & 77.9 & 65.2 & 70.8
& 94.1 & 41.9 & 69.5 & 61.5 & 66.8 \\
\rowcolor{gray!20}
SMORM-F  
& 96.1 & 45.5 & 78.8 & 70.9 & \textbf{72.8}
& 96.1 & 44.1 & 81.1 & 62.7 & \textbf{71.0} \\
\bottomrule
\end{tabular}
}
\label{tab:sorm_combined}
\end{table*}
\noindent\textbf{Comparison to Single-Objective Reward Model.} Table~\ref{tab:sorm_combined} compares SMORM-F with baseline SORMs using gemma-2B-it as the base model on RewardBench. Results using Mistral-7B-Instruct as the base model are provided in Appendix \ref{appendix:additionalsingle}.
We observe that SMORM-F consistently achieves the highest average performance across all experimental settings. This finding supports Theorem~\ref{theorem2}, which establishes that our joint training framework leads to superior performance compared to training single-objective reward functions in isolation. Additional results on evaluating SMORM-F in OOD settings are provided in Appendix~\ref{appendix:oodevaluation}.

\noindent\textbf{Comparison to Multi-Objective Reward Model.} Table~\ref{tab:morm_combined} reports the results of comparing SMORM-L with baseline MORMs on RewardBench and RM-Bench. We find that SMORM significantly improves the performance of its multi-objective head.
For instance, using Mistral-7B-Instruct as the base model and 40k samples from \texttt{UnifiedFeedback}, SMORM achieves a score of \textbf{13.9} higher on RewardBench and \textbf{12.4} higher on RM-Bench.
These findings validate Theorem~\ref{prop:implicit_multi_effect_detailed}, showing that BT modeling helps correct the positioning of responses such that the multi-objective score is lower-bounded by the corresponding single-objective score. They also support Theorem~\ref{theorem2}, confirming that our joint training framework yields reward models that outperform their baseline counterparts.
\begin{table*}[h]
    \centering
    \caption{Comparison of SMORM-L and baseline MORM on RewardBench and RM-Bench.}
    \resizebox{0.95\textwidth}{!}{
    \begin{tabular}{lcccccc}
        \toprule
        \textbf{Reward model}    & \textbf{Chat}   & \textbf{Chat Hard} & \textbf{Safety} & \textbf{Reasoning} & \textbf{RewardBench} & \textbf{RM-Bench}\\
        \midrule
        \multicolumn{7}{c}{Base Model: Gemma 2b it, $\mathcal{D_S}/\mathcal{D}_M$: UnifiedFeedback 400k/UltraFeedback} \\
        \midrule
        Baseline (Multi)  & 64.2 & 50.0 & 46.1 & 42.3 & 50.6 & 50.2\\
        SMORM-L           & 90.8 & 48.3 & 61.5 & 53.7 & \textbf{63.6} & \textbf{55.1}\\
        \midrule
        \multicolumn{7}{c}{Base Model: Gemma 2b it, $\mathcal{D_S}/\mathcal{D}_M$: UnifiedFeedback 40k/HelpSteer2} \\
        \midrule
        Baseline (Multi)  & 84.1 & 39.0 & 57.4 & 42.6 & 55.8 & 51.8\\
        SMORM-L           & 94.9 & 39.3 & 75.8 & 51.4 & \textbf{65.4} & \textbf{54.1}\\
        \midrule
        \multicolumn{7}{c}{Base Model: Mistral 7b Instruct, $\mathcal{D_S}/\mathcal{D}_M$: UnifiedFeedback 400k/UltraFeedback} \\
        \midrule
        Baseline (Multi)  & 95.5 & 65.1 & 68.6 & 75.9 & 76.3 & 57.2\\
        SMORM-L           & 97.8 & 61.0 & 86.4 & 77.7 & \textbf{80.7} & \textbf{62.6}\\
        \midrule
        \multicolumn{7}{c}{Base Model: Mistral 7b Instruct, $\mathcal{D_S}/\mathcal{D}_M$: UnifiedFeedback 40k/HelpSteer2} \\
        \midrule
        Baseline (Multi)  & 71.8 & 60.1 & 54.3 & 78.0 & 66.0 & 52.0\\
        SMORM-L           & 94.4 & 61.8 & 83.6 & 79.7 & \textbf{79.9} & \textbf{64.4}\\
        \bottomrule
    \end{tabular}
    }
    \label{tab:morm_combined}
\end{table*}

\noindent\textbf{Comparison to Advanced Reward Models.} 
We follow the approach of \cite{wang2024interpretable} and train a gating network that automatically assigns weights to each attribute during inference. 
We train our models using both 7B and 8B variants. The 7B model is initialized from Mistral-7B-Instruct-v0.2~\citep{jiang2023mistral7b} and trained on $\mathcal{D}_S/\mathcal{D}_M$: \texttt{UnifiedFeedback (40K)} and \texttt{HelpSteer2 (20K)}, respectively. The 8B model is initialized from Llama-3.1-8B-Instruct~\citep{grattafiori2024llama} and trained on $\mathcal{D}_S/\mathcal{D}_M$: \texttt{Skywork80K} \cite{liu2024skywork} and \texttt{HelpSteer2 (20K)}. In both cases, the gating network is trained using the corresponding $\mathcal{D}_S$ with BT loss.
We compare our models against advanced large multi-objective reward models also trained on \texttt{HelpSteer2}, where the weight for each attribute is also tuned~\citep{wang2024helpsteer2}. Additionally, we include a comparison with ArmoRM-Llama3-8B-v0.1~\citep{wang2024interpretable}, a multi-objective reward model trained on significantly more extensive preference data. 
Specifically, the multi-objective component of ArmoRM is trained on 585.4K samples~\citep{wang2024interpretable}, compared to only 20K for ours. 
Its gating network is trained on 1,004.4K samples, while our 7B and 8B models use only 40K and 80K samples, respectively.
Table~\ref{tab:comparetolarge} presents the comparative results. While our 7B model underperforms compared to the 340B model, it outperforms the 70B model. Notably, our 8B model matches the performance of ArmoRM-Llama3-8B-v0.1—despite the latter being trained on \textbf{15.9$\times$} more data. 
These results demonstrate the effectiveness of our SMORM framework in leveraging BT reward modeling to enhance the performance of multi-objective reward modeling.
\begin{table}[h]
    \centering
    \caption{Comparison with Advanced Multi-Objective Reward Models on RewardBench.}
    \resizebox{\textwidth}{!}{
    \begin{tabular}{c|ccccccc}
    \toprule
    \textbf{Reward model}   & \textbf{Size of $\mathcal{D}_M$} &\textbf{Model Size}& \textbf{Chat}   & \textbf{Chat Hard} & \textbf{Safety} & \textbf{Reasoning} & \textbf{Avg}\\
        \midrule
         Nemotron-4-340B-RM \cite{wang2024helpsteer2} & 20K&340B&92.0&95.8 &87.1 &91.5 & 93.7 \\
         ArmoRM-Llama3-8B-v0.1 \cite{wang2024interpretable}  &585.4K&8B& 96.9 & 76.8 & 90.5 & 97.3 & 90.4 \\
         Llama-3-70B-RM \cite{wang2024helpsteer2}  &20K&70B& 91.3 &80.3 &92.8 &90.7&88.8 \\
         \midrule
         SMORM-L 7B (Ours) &20K &7B&95.0 & 80.5 & 91.6 & 89.0 & 89.0 \\
         SMORM-L 8B (Ours) &20K & 8B & 94.7 & 85.1 & 90.5 & 91.3 & 90.4 \\        
         \bottomrule
    \end{tabular}
    }
    \label{tab:comparetolarge}
\end{table}

\subsection{Evaluation on RLHF}
\label{exp:rlhf}
\noindent\textbf{In-Distribution Setting.} Figures~\ref{fig:idppo} and~\ref{fig:idbon} present the results of PPO and BoN under the in-distribution setting.
For \textit{PPO}, we observe that the gold score of the baseline reward function increases slowly in the early stages of training but subsequently declines, while the proxy score continues to rise. This pattern indicates reward overoptimization. In contrast, GRM exhibits a rapid initial increase in gold score, followed by a similar decline trend. Notably, both SMORM-F and SMORM-M show a consistent increase in gold score throughout training. SMORM-F achieves performance comparable to SMORM-M, which further supports Theorem~\ref{prop:implicit_multi_effect_detailed} regarding the implicit benefit of joint training. In the \textit{BoN} experiments, when using \texttt{gemma-2B-it} as the base model, we find that the gold score of the baseline reward function decreases as the KL divergence increases. For GRM, the gold score increases slowly and then plateaus. 
We hypothesize that this is because GRM is designed to limit adversarial exploitation, aiming to prevent models from obtaining high rewards through adversarial inputs, but it lacks theoretical guarantees for minimizing pairwise preference error.
In contrast, both SMORM-F and SMORM-M exhibit a consistent increase in gold scores across KL ranges, demonstrating the robustness and effectiveness of our framework in the in-distribution setting. 
\begin{figure}[h]
    \centering
    \includegraphics[width=\linewidth]{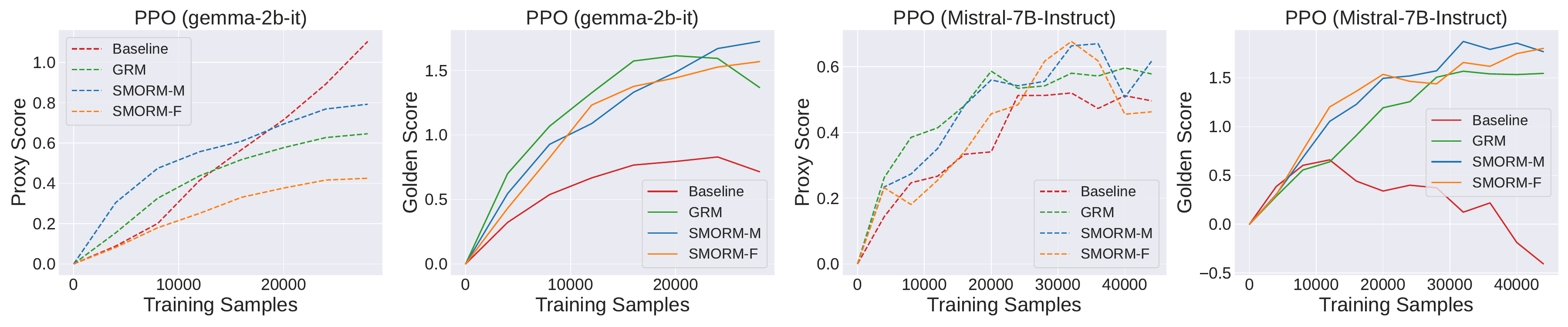}
    \begin{minipage}{0.24\textwidth}
        \small
        \centering
        \textbf{(a)}
    \end{minipage}%
    \begin{minipage}{0.24\textwidth}
        \small
        \centering
        \textbf{(b)}
    \end{minipage}
    \begin{minipage}{0.25\textwidth}
        \small
        \centering
        \textbf{(c)}
    \end{minipage}
    \begin{minipage}{0.25\textwidth}
        \small
        \centering
        \textbf{(d)}
    \end{minipage}
    \vskip -0.5em
    \caption{Proxy scores and gold scores of PPO experiments for reward model based on (a)(b) gemma2b-it and (c)(d) Mistral-7B-Instruct. All rewards are normalized to start from 0.}
    \label{fig:idppo}
\end{figure}
\begin{figure}[h]
    \centering
    \includegraphics[width=\linewidth]{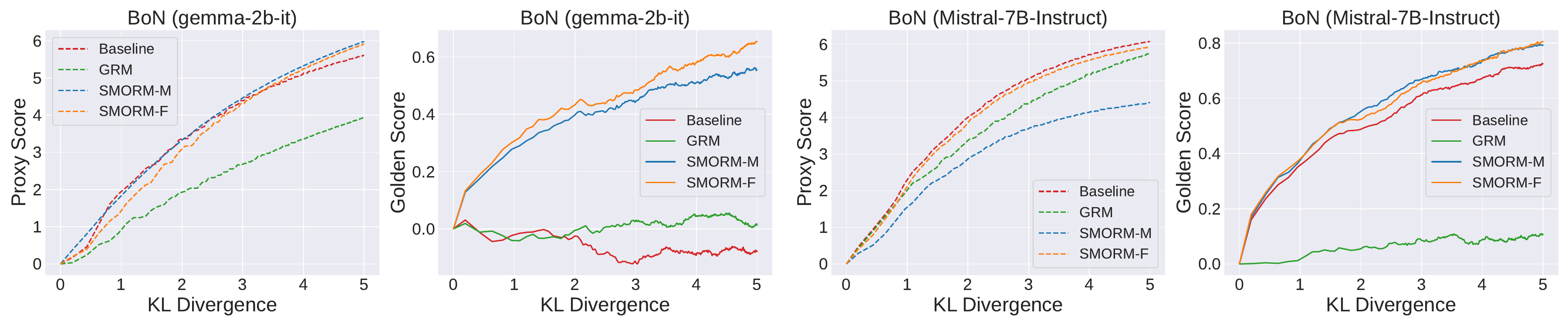}
    \begin{minipage}{0.24\textwidth}
        \small
        \centering
        \textbf{(a)}
    \end{minipage}%
    \begin{minipage}{0.24\textwidth}
        \small
        \centering
        \textbf{(b)}
    \end{minipage}
    \begin{minipage}{0.25\textwidth}
        \small
        \centering
        \textbf{(c)}
    \end{minipage}
    \begin{minipage}{0.25\textwidth}
        \small
        \centering
        \textbf{(d)}
    \end{minipage}
    \caption{Proxy scores and gold scores of BoN experiments for base models of (a)(b) gemma-2b-it and (c)(d) Mistral-7B-Instruct. Rewards are normalized to start from 0.}
    \label{fig:idbon}
\end{figure}

\noindent\textbf{Out-of-Distribution Setting.} 
Figure~\ref{fig:oodmistral} presents PPO and BoN results under the OOD setting using Mistral-7B-Instruct as the base model. Results with gemma-2B-it were shown earlier in Section~\ref{hacking}. While the baseline reward function and GRM do not exhibit clear reward hacking with the stronger model in the OOD setting, the performance gap between SMORM and GRM becomes more pronounced compared to the in-distribution results in Figure~\ref{fig:idppo}. This underscores the limitations of existing methods and demonstrates the robustness of SMORM, especially under challenging OOD conditions.
Additional results comparing PPO-optimized models are provided in Appendix~\ref{appendix:alignment}.
\begin{figure}[h]
    \centering
    \includegraphics[width=\linewidth]{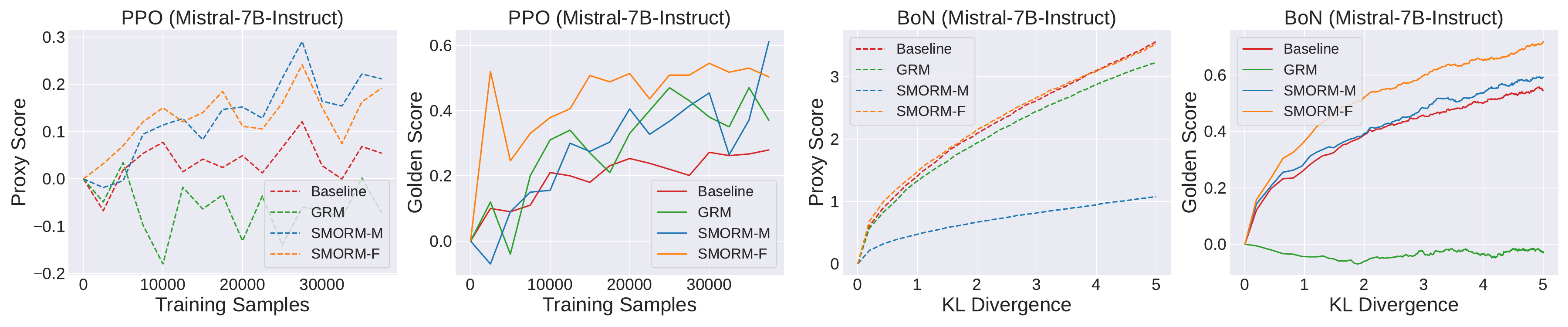}
    \begin{minipage}{0.24\textwidth}
        \small
        \centering
        \textbf{(a)}
    \end{minipage}%
    \begin{minipage}{0.24\textwidth}
        \small
        \centering
        \textbf{(b)}
    \end{minipage}
    \begin{minipage}{0.25\textwidth}
        \small
        \centering
        \textbf{(c)}
    \end{minipage}
    \begin{minipage}{0.25\textwidth}
        \small
        \centering
        \textbf{(d)}
    \end{minipage}
    \caption{Proxy and gold scores from (a)(b) PPO and (c)(d) BoN experiments under the OOD setting.}
    \label{fig:oodmistral}
\end{figure}

\section{Conclusion}
In this paper, we propose an effective and theoretically grounded method that jointly trains a Bradley–Terry reward function and a multi-objective reward function using a shared embedding space. Our theoretical analysis and extensive empirical results demonstrate that this joint training approach enhances the robustness of the BT function against reward hacking and significantly improves the scoring performance of the multi-objective function, revealing their complementary benefits.

\newpage

{
\newpage
\bibliographystyle{unsrt}
\bibliography{ref}

\begin{thebibliography}{10}

\bibitem{chang2024survey}
Yupeng Chang, Xu~Wang, Jindong Wang, Yuan Wu, Linyi Yang, Kaijie Zhu, Hao Chen, Xiaoyuan Yi, Cunxiang Wang, Yidong Wang, et~al.
\newblock A survey on evaluation of large language models.
\newblock {\em ACM transactions on intelligent systems and technology}, 15(3):1--45, 2024.

\bibitem{nam2024using}
Daye Nam, Andrew Macvean, Vincent Hellendoorn, Bogdan Vasilescu, and Brad Myers.
\newblock Using an llm to help with code understanding.
\newblock In {\em Proceedings of the IEEE/ACM 46th International Conference on Software Engineering}, pages 1--13, 2024.

\bibitem{zhao2023survey}
Wayne~Xin Zhao, Kun Zhou, Junyi Li, Tianyi Tang, Xiaolei Wang, Yupeng Hou, Yingqian Min, Beichen Zhang, Junjie Zhang, Zican Dong, et~al.
\newblock A survey of large language models.
\newblock {\em arXiv preprint arXiv:2303.18223}, 1(2), 2023.

\bibitem{wang2024comprehensive}
Fali Wang, Zhiwei Zhang, Xianren Zhang, Zongyu Wu, Tzuhao Mo, Qiuhao Lu, Wanjing Wang, Rui Li, Junjie Xu, Xianfeng Tang, et~al.
\newblock A comprehensive survey of small language models in the era of large language models: Techniques, enhancements, applications, collaboration with llms, and trustworthiness.
\newblock {\em arXiv preprint arXiv:2411.03350}, 2024.

\bibitem{jiang2024survey}
Juyong Jiang, Fan Wang, Jiasi Shen, Sungju Kim, and Sunghun Kim.
\newblock A survey on large language models for code generation.
\newblock {\em arXiv preprint arXiv:2406.00515}, 2024.

\bibitem{nguyen2025codemmlu}
Dung~Manh Nguyen, Thang~Chau Phan, Nam~Le Hai, Tien-Thong Doan, Nam~V. Nguyen, Quang Pham, and Nghi D.~Q. Bui.
\newblock Code{MMLU}: A multi-task benchmark for assessing code understanding \& reasoning capabilities of code{LLM}s.
\newblock In {\em The Thirteenth International Conference on Learning Representations}, 2025.

\bibitem{ye2025rational}
Yining Ye, Xin Cong, Shizuo Tian, Yujia Qin, Chong Liu, Yankai Lin, Zhiyuan Liu, and Maosong Sun.
\newblock Rational decision-making agent with learning internal utility judgment.
\newblock In {\em The Thirteenth International Conference on Learning Representations}, 2025.

\bibitem{lin2025far}
Minhua Lin, Hui Liu, Xianfeng Tang, Jingying Zeng, Zhenwei Dai, Chen Luo, Zheng Li, Xiang Zhang, Qi~He, and Suhang Wang.
\newblock How far are llms from real search? a comprehensive study on efficiency, completeness, and inherent capabilities.
\newblock {\em arXiv preprint arXiv:2502.18387}, 2025.

\bibitem{dai2024safe}
Josef Dai, Xuehai Pan, Ruiyang Sun, Jiaming Ji, Xinbo Xu, Mickel Liu, Yizhou Wang, and Yaodong Yang.
\newblock Safe {RLHF}: Safe reinforcement learning from human feedback.
\newblock In {\em The Twelfth International Conference on Learning Representations}, 2024.

\bibitem{li2025data}
Xiaomin Li, Mingye Gao, Zhiwei Zhang, Jingxuan Fan, and Weiyu Li.
\newblock Data-adaptive safety rules for training reward models.
\newblock {\em arXiv preprint arXiv:2501.15453}, 2025.

\bibitem{shen2023large}
Tianhao Shen, Renren Jin, Yufei Huang, Chuang Liu, Weilong Dong, Zishan Guo, Xinwei Wu, Yan Liu, and Deyi Xiong.
\newblock Large language model alignment: A survey.
\newblock {\em arXiv preprint arXiv:2309.15025}, 2023.

\bibitem{zhang2025catastrophic}
Zhiwei Zhang, Fali Wang, Xiaomin Li, Zongyu Wu, Xianfeng Tang, Hui Liu, Qi~He, Wenpeng Yin, and Suhang Wang.
\newblock Catastrophic failure of {LLM} unlearning via quantization.
\newblock In {\em The Thirteenth International Conference on Learning Representations}, 2025.

\bibitem{bai2022training}
Yuntao Bai, Andy Jones, Kamal Ndousse, Amanda Askell, Anna Chen, Nova DasSarma, Dawn Drain, Stanislav Fort, Deep Ganguli, Tom Henighan, et~al.
\newblock Training a helpful and harmless assistant with reinforcement learning from human feedback.
\newblock {\em arXiv preprint arXiv:2204.05862}, 2022.

\bibitem{ouyang2022training}
Long Ouyang, Jeffrey Wu, Xu~Jiang, Diogo Almeida, Carroll Wainwright, Pamela Mishkin, Chong Zhang, Sandhini Agarwal, Katarina Slama, Alex Ray, et~al.
\newblock Training language models to follow instructions with human feedback.
\newblock {\em Advances in neural information processing systems}, 35:27730--27744, 2022.

\bibitem{schulman2017proximal}
John Schulman, Filip Wolski, Prafulla Dhariwal, Alec Radford, and Oleg Klimov.
\newblock Proximal policy optimization algorithms.
\newblock {\em arXiv preprint arXiv:1707.06347}, 2017.

\bibitem{ahmadian2024back}
Arash Ahmadian, Chris Cremer, Matthias Gall{\'e}, Marzieh Fadaee, Julia Kreutzer, Olivier Pietquin, Ahmet {\"U}st{\"u}n, and Sara Hooker.
\newblock Back to basics: Revisiting reinforce style optimization for learning from human feedback in llms.
\newblock {\em arXiv preprint arXiv:2402.14740}, 2024.

\bibitem{shao2024deepseekmath}
Zhihong Shao, Peiyi Wang, Qihao Zhu, Runxin Xu, Junxiao Song, Xiao Bi, Haowei Zhang, Mingchuan Zhang, YK~Li, Y~Wu, et~al.
\newblock Deepseekmath: Pushing the limits of mathematical reasoning in open language models.
\newblock {\em arXiv preprint arXiv:2402.03300}, 2024.

\bibitem{gao2023scaling}
Leo Gao, John Schulman, and Jacob Hilton.
\newblock Scaling laws for reward model overoptimization.
\newblock In {\em International Conference on Machine Learning}, pages 10835--10866. PMLR, 2023.

\bibitem{yang2024regularizing}
Rui Yang, Ruomeng Ding, Yong Lin, Huan Zhang, and Tong Zhang.
\newblock Regularizing hidden states enables learning generalizable reward model for {LLM}s.
\newblock In {\em The Thirty-eighth Annual Conference on Neural Information Processing Systems}, 2024.

\bibitem{gulcehre2023reinforced}
Caglar Gulcehre, Tom~Le Paine, Srivatsan Srinivasan, Ksenia Konyushkova, Lotte Weerts, Abhishek Sharma, Aditya Siddhant, Alex Ahern, Miaosen Wang, Chenjie Gu, et~al.
\newblock Reinforced self-training (rest) for language modeling.
\newblock {\em arXiv preprint arXiv:2308.08998}, 2023.

\bibitem{dong2023raft}
Hanze Dong, Wei Xiong, Deepanshu Goyal, Yihan Zhang, Winnie Chow, Rui Pan, Shizhe Diao, Jipeng Zhang, KaShun SHUM, and Tong Zhang.
\newblock {RAFT}: Reward ranked finetuning for generative foundation model alignment.
\newblock {\em Transactions on Machine Learning Research}, 2023.

\bibitem{gui2024bonbon}
Lin Gui, Cristina Garbacea, and Victor Veitch.
\newblock Bo{NB}on alignment for large language models and the sweetness of best-of-n sampling.
\newblock In {\em The Thirty-eighth Annual Conference on Neural Information Processing Systems}, 2024.

\bibitem{coste2024reward}
Thomas Coste, Usman Anwar, Robert Kirk, and David Krueger.
\newblock Reward model ensembles help mitigate overoptimization.
\newblock In {\em The Twelfth International Conference on Learning Representations}, 2024.

\bibitem{eisenstein2023helping}
Jacob Eisenstein, Chirag Nagpal, Alekh Agarwal, Ahmad Beirami, Alex D'Amour, DJ~Dvijotham, Adam Fisch, Katherine Heller, Stephen Pfohl, Deepak Ramachandran, et~al.
\newblock Helping or herding? reward model ensembles mitigate but do not eliminate reward hacking.
\newblock {\em arXiv preprint arXiv:2312.09244}, 2023.

\bibitem{zhang2024improving}
Shun Zhang, Zhenfang Chen, Sunli Chen, Yikang Shen, Zhiqing Sun, and Chuang Gan.
\newblock Improving reinforcement learning from human feedback with efficient reward model ensemble.
\newblock {\em arXiv preprint arXiv:2401.16635}, 2024.

\bibitem{yan2024reward}
Yuzi Yan, Xingzhou Lou, Jialian Li, Yiping Zhang, Jian Xie, Chao Yu, Yu~Wang, Dong Yan, and Yuan Shen.
\newblock Reward-robust rlhf in llms.
\newblock {\em arXiv preprint arXiv:2409.15360}, 2024.

\bibitem{rame2024warm}
Alexandre Ram{\'e}, Nino Vieillard, L{\'e}onard Hussenot, Robert Dadashi, Geoffrey Cideron, Olivier Bachem, and Johan Ferret.
\newblock Warm: On the benefits of weight averaged reward models.
\newblock {\em arXiv preprint arXiv:2401.12187}, 2024.

\bibitem{zhang2024mitigating}
Xiaoying Zhang, Jean-Francois Ton, Wei Shen, Hongning Wang, and Yang Liu.
\newblock Mitigating reward overoptimization via lightweight uncertainty estimation.
\newblock In {\em The Thirty-eighth Annual Conference on Neural Information Processing Systems}, 2024.

\bibitem{moskovitz2024confronting}
Ted Moskovitz, Aaditya~K Singh, DJ~Strouse, Tuomas Sandholm, Ruslan Salakhutdinov, Anca Dragan, and Stephen~Marcus McAleer.
\newblock Confronting reward model overoptimization with constrained {RLHF}.
\newblock In {\em The Twelfth International Conference on Learning Representations}, 2024.

\bibitem{liu2024provably}
Zhihan Liu, Miao Lu, Shenao Zhang, Boyi Liu, Hongyi Guo, Yingxiang Yang, Jose Blanchet, and Zhaoran Wang.
\newblock Provably mitigating overoptimization in {RLHF}: Your {SFT} loss is implicitly an adversarial regularizer.
\newblock In {\em The Thirty-eighth Annual Conference on Neural Information Processing Systems}, 2024.

\bibitem{zhang2024overcoming}
Xiaoying Zhang, Jean-Francois Ton, Wei Shen, Hongning Wang, and Yang Liu.
\newblock Overcoming reward overoptimization via adversarial policy optimization with lightweight uncertainty estimation.
\newblock {\em arXiv preprint arXiv:2403.05171}, 2024.

\bibitem{laidlaw2024correlated}
Cassidy Laidlaw, Shivam Singhal, and Anca Dragan.
\newblock Correlated proxies: A new definition and improved mitigation for reward hacking.
\newblock {\em arXiv preprint arXiv:2403.03185}, 2024.

\bibitem{chen2024odin}
Lichang Chen, Chen Zhu, Jiuhai Chen, Davit Soselia, Tianyi Zhou, Tom Goldstein, Heng Huang, Mohammad Shoeybi, and Bryan Catanzaro.
\newblock Odin: Disentangled reward mitigates hacking in rlhf.
\newblock In {\em International Conference on Machine Learning}, pages 7935--7952. PMLR, 2024.

\bibitem{dai2025mitigating}
Juntao Dai, Taiye Chen, Yaodong Yang, Qian Zheng, and Gang Pan.
\newblock Mitigating reward over-optimization in {RLHF} via behavior-supported regularization.
\newblock In {\em The Thirteenth International Conference on Learning Representations}, 2025.

\bibitem{wang2024helpsteer}
Zhilin Wang, Yi~Dong, Olivier Delalleau, Jiaqi Zeng, Gerald Shen, Daniel Egert, Jimmy~J. Zhang, Makesh~Narsimhan Sreedhar, and Oleksii Kuchaiev.
\newblock Helpsteer 2: Open-source dataset for training top-performing reward models.
\newblock In {\em The Thirty-eight Conference on Neural Information Processing Systems Datasets and Benchmarks Track}, 2024.

\bibitem{wang2024interpretable}
Haoxiang Wang, Wei Xiong, Tengyang Xie, Han Zhao, and Tong Zhang.
\newblock Interpretable preferences via multi-objective reward modeling and mixture-of-experts.
\newblock In {\em Findings of the Association for Computational Linguistics: EMNLP 2024}, pages 10582--10592, 2024.

\bibitem{cui2023ultrafeedback}
Ganqu Cui, Lifan Yuan, Ning Ding, Guanming Yao, Wei Zhu, Yuan Ni, Guotong Xie, Zhiyuan Liu, and Maosong Sun.
\newblock Ultrafeedback: Boosting language models with high-quality feedback.
\newblock 2023.

\bibitem{kim2023prometheus}
Seungone Kim, Jamin Shin, Yejin Cho, Joel Jang, Shayne Longpre, Hwaran Lee, Sangdoo Yun, Seongjin Shin, Sungdong Kim, James Thorne, et~al.
\newblock Prometheus: Inducing fine-grained evaluation capability in language models.
\newblock In {\em The Twelfth International Conference on Learning Representations}, 2023.

\bibitem{li2024generation}
Dawei Li, Bohan Jiang, Liangjie Huang, Alimohammad Beigi, Chengshuai Zhao, Zhen Tan, Amrita Bhattacharjee, Yuxuan Jiang, Canyu Chen, Tianhao Wu, et~al.
\newblock From generation to judgment: Opportunities and challenges of llm-as-a-judge.
\newblock {\em arXiv preprint arXiv:2411.16594}, 2024.

\bibitem{gu2024survey}
Jiawei Gu, Xuhui Jiang, Zhichao Shi, Hexiang Tan, Xuehao Zhai, Chengjin Xu, Wei Li, Yinghan Shen, Shengjie Ma, Honghao Liu, et~al.
\newblock A survey on llm-as-a-judge.
\newblock {\em arXiv preprint arXiv:2411.15594}, 2024.

\bibitem{wang2024helpsteer2}
Zhilin Wang, Yi~Dong, Olivier Delalleau, Jiaqi Zeng, Gerald Shen, Daniel Egert, Jimmy~J. Zhang, Makesh~Narsimhan Sreedhar, and Oleksii Kuchaiev.
\newblock Helpsteer2: Open-source dataset for training top-performing reward models, 2024.

\bibitem{lambert2024rewardbench}
Nathan Lambert, Valentina Pyatkin, Jacob Morrison, LJ~Miranda, Bill~Yuchen Lin, Khyathi Chandu, Nouha Dziri, Sachin Kumar, Tom Zick, Yejin Choi, Noah~A. Smith, and Hannaneh Hajishirzi.
\newblock Rewardbench: Evaluating reward models for language modeling, 2024.

\bibitem{liu2025rmbench}
Yantao Liu, Zijun Yao, Rui Min, Yixin Cao, Lei Hou, and Juanzi Li.
\newblock {RM}-bench: Benchmarking reward models of language models with subtlety and style.
\newblock In {\em The Thirteenth International Conference on Learning Representations}, 2025.

\bibitem{bradley1952rank}
Ralph~Allan Bradley and Milton~E Terry.
\newblock Rank analysis of incomplete block designs: I. the method of paired comparisons.
\newblock {\em Biometrika}, 39(3/4):324--345, 1952.

\bibitem{wang2023helpsteer}
Zhilin Wang, Yi~Dong, Jiaqi Zeng, Virginia Adams, Makesh~Narsimhan Sreedhar, Daniel Egert, Olivier Delalleau, Jane~Polak Scowcroft, Neel Kant, Aidan Swope, et~al.
\newblock Helpsteer: Multi-attribute helpfulness dataset for steerlm.
\newblock {\em arXiv preprint arXiv:2311.09528}, 2023.

\bibitem{wang2024arithmetic}
Haoxiang Wang, Yong Lin, Wei Xiong, Rui Yang, Shizhe Diao, Shuang Qiu, Han Zhao, and Tong Zhang.
\newblock Arithmetic control of llms for diverse user preferences: Directional preference alignment with multi-objective rewards.
\newblock In {\em ACL}, 2024.

\bibitem{stiennon2020learning}
Nisan Stiennon, Long Ouyang, Jeffrey Wu, Daniel Ziegler, Ryan Lowe, Chelsea Voss, Alec Radford, Dario Amodei, and Paul~F Christiano.
\newblock Learning to summarize with human feedback.
\newblock {\em Advances in neural information processing systems}, 33:3008--3021, 2020.

\bibitem{wu2024pairwise}
Tianhao Wu, Banghua Zhu, Ruoyu Zhang, Zhaojin Wen, Kannan Ramchandran, and Jiantao Jiao.
\newblock Pairwise proximal policy optimization: Language model alignment with comparative {RL}.
\newblock In {\em First Conference on Language Modeling}, 2024.

\bibitem{fu2025reward}
Jiayi Fu, Xuandong Zhao, Chengyuan Yao, Heng Wang, Qi~Han, and Yanghua Xiao.
\newblock Reward shaping to mitigate reward hacking in rlhf.
\newblock {\em arXiv preprint arXiv:2502.18770}, 2025.

\bibitem{liu2024skywork}
Chris~Yuhao Liu, Liang Zeng, Jiacai Liu, Rui Yan, Jujie He, Chaojie Wang, Shuicheng Yan, Yang Liu, and Yahui Zhou.
\newblock Skywork-reward: Bag of tricks for reward modeling in llms.
\newblock {\em arXiv preprint arXiv:2410.18451}, 2024.

\bibitem{team2024gemma}
Gemma Team, Thomas Mesnard, Cassidy Hardin, Robert Dadashi, Surya Bhupatiraju, Shreya Pathak, Laurent Sifre, Morgane Rivi{\`e}re, Mihir~Sanjay Kale, Juliette Love, et~al.
\newblock Gemma: Open models based on gemini research and technology.
\newblock {\em arXiv preprint arXiv:2403.08295}, 2024.

\bibitem{coste2023reward}
Thomas Coste, Usman Anwar, Robert Kirk, and David Krueger.
\newblock Reward model ensembles help mitigate overoptimization.
\newblock {\em arXiv preprint arXiv:2310.02743}, 2023.

\bibitem{jiang2023mistral7b}
Albert~Q. Jiang, Alexandre Sablayrolles, Arthur Mensch, Chris Bamford, Devendra~Singh Chaplot, Diego de~las Casas, Florian Bressand, Gianna Lengyel, Guillaume Lample, Lucile Saulnier, Lélio~Renard Lavaud, Marie-Anne Lachaux, Pierre Stock, Teven~Le Scao, Thibaut Lavril, Thomas Wang, Timothée Lacroix, and William~El Sayed.
\newblock Mistral 7b, 2023.

\bibitem{touvron2023llama}
Hugo Touvron, Louis Martin, Kevin Stone, Peter Albert, Amjad Almahairi, Yasmine Babaei, Nikolay Bashlykov, Soumya Batra, Prajjwal Bhargava, Shruti Bhosale, et~al.
\newblock Llama 2: Open foundation and fine-tuned chat models.
\newblock {\em arXiv preprint arXiv:2307.09288}, 2023.

\bibitem{wang2024secrets}
Binghai Wang, Rui Zheng, Lu~Chen, Yan Liu, Shihan Dou, Caishuang Huang, Wei Shen, Senjie Jin, Enyu Zhou, Chenyu Shi, et~al.
\newblock Secrets of rlhf in large language models part ii: Reward modeling.
\newblock {\em arXiv preprint arXiv:2401.06080}, 2024.

\bibitem{grattafiori2024llama}
Aaron Grattafiori, Abhimanyu Dubey, Abhinav Jauhri, Abhinav Pandey, Abhishek Kadian, Ahmad Al-Dahle, Aiesha Letman, Akhil Mathur, Alan Schelten, Alex Vaughan, et~al.
\newblock The llama 3 herd of models.
\newblock {\em arXiv preprint arXiv:2407.21783}, 2024.

\bibitem{dong2024rlhf}
Hanze Dong, Wei Xiong, Bo~Pang, Haoxiang Wang, Han Zhao, Yingbo Zhou, Nan Jiang, Doyen Sahoo, Caiming Xiong, and Tong Zhang.
\newblock Rlhf workflow: From reward modeling to online rlhf.
\newblock {\em arXiv preprint arXiv:2405.07863}, 2024.

\bibitem{wen2025rethinking}
Xueru Wen, Jie Lou, Yaojie Lu, Hongyu Lin, XingYu, Xinyu Lu, Ben He, Xianpei Han, Debing Zhang, and Le~Sun.
\newblock Rethinking reward model evaluation: Are we barking up the wrong tree?
\newblock In {\em The Thirteenth International Conference on Learning Representations}, 2025.

\bibitem{razin2025makes}
Noam Razin, Zixuan Wang, Hubert Strauss, Stanley Wei, Jason~D Lee, and Sanjeev Arora.
\newblock What makes a reward model a good teacher? an optimization perspective.
\newblock {\em arXiv preprint arXiv:2503.15477}, 2025.

\bibitem{lin2023mitigating}
Yong Lin, Hangyu Lin, Wei Xiong, Shizhe Diao, Jianmeng Liu, Jipeng Zhang, Rui Pan, Haoxiang Wang, Wenbin Hu, Hanning Zhang, et~al.
\newblock Mitigating the alignment tax of rlhf.
\newblock {\em arXiv preprint arXiv:2309.06256}, 2023.

\bibitem{sun2025rethinking}
Hao Sun, Yunyi Shen, and Jean-Francois Ton.
\newblock Rethinking reward modeling in preference-based large language model alignment.
\newblock In {\em The Thirteenth International Conference on Learning Representations}, 2025.

\bibitem{zhu2023starling}
Banghua Zhu, Evan Frick, Tianhao Wu, Hanlin Zhu, and Jiantao Jiao.
\newblock Starling-7b: Improving llm helpfulness \& harmlessness with rlaif.
\newblock {\em 2023}, 2023.

\bibitem{adler2024nemotron}
Bo~Adler, Niket Agarwal, Ashwath Aithal, Dong~H Anh, Pallab Bhattacharya, Annika Brundyn, Jared Casper, Bryan Catanzaro, Sharon Clay, Jonathan Cohen, et~al.
\newblock Nemotron-4 340b technical report.
\newblock {\em arXiv preprint arXiv:2406.11704}, 2024.

\bibitem{wu2023fine}
Zeqiu Wu, Yushi Hu, Weijia Shi, Nouha Dziri, Alane Suhr, Prithviraj Ammanabrolu, Noah~A Smith, Mari Ostendorf, and Hannaneh Hajishirzi.
\newblock Fine-grained human feedback gives better rewards for language model training.
\newblock {\em Advances in Neural Information Processing Systems}, 36:59008--59033, 2023.

\bibitem{guo2023beyond}
Geyang Guo, Ranchi Zhao, Tianyi Tang, Wayne~Xin Zhao, and Ji-Rong Wen.
\newblock Beyond imitation: Leveraging fine-grained quality signals for alignment.
\newblock {\em arXiv preprint arXiv:2311.04072}, 2023.

\bibitem{chiang2024chatbot}
Wei-Lin Chiang, Lianmin Zheng, Ying Sheng, Anastasios~Nikolas Angelopoulos, Tianle Li, Dacheng Li, Banghua Zhu, Hao Zhang, Michael Jordan, Joseph~E Gonzalez, et~al.
\newblock Chatbot arena: An open platform for evaluating llms by human preference.
\newblock In {\em Forty-first International Conference on Machine Learning}, 2024.

\bibitem{nakano2021webgpt}
Reiichiro Nakano, Jacob Hilton, Suchir Balaji, Jeff Wu, Long Ouyang, Christina Kim, Christopher Hesse, Shantanu Jain, Vineet Kosaraju, William Saunders, et~al.
\newblock Webgpt: Browser-assisted question-answering with human feedback.
\newblock {\em arXiv preprint arXiv:2112.09332}, 2021.

\bibitem{wang2023math}
Peiyi Wang, Lei Li, Zhihong Shao, RX~Xu, Damai Dai, Yifei Li, Deli Chen, Yu~Wu, and Zhifang Sui.
\newblock Math-shepherd: Verify and reinforce llms step-by-step without human annotations.
\newblock {\em arXiv preprint arXiv:2312.08935}, 2023.

\bibitem{luo2024improve}
Liangchen Luo, Yinxiao Liu, Rosanne Liu, Samrat Phatale, Meiqi Guo, Harsh Lara, Yunxuan Li, Lei Shu, Yun Zhu, Lei Meng, et~al.
\newblock Improve mathematical reasoning in language models by automated process supervision.
\newblock {\em arXiv preprint arXiv:2406.06592}, 2024.

\bibitem{zhang2025lessons}
Zhenru Zhang, Chujie Zheng, Yangzhen Wu, Beichen Zhang, Runji Lin, Bowen Yu, Dayiheng Liu, Jingren Zhou, and Junyang Lin.
\newblock The lessons of developing process reward models in mathematical reasoning.
\newblock {\em arXiv preprint arXiv:2501.07301}, 2025.

\bibitem{zhu2024iterative}
Banghua Zhu, Michael~I Jordan, and Jiantao Jiao.
\newblock Iterative data smoothing: Mitigating reward overfitting and overoptimization in rlhf.
\newblock {\em arXiv preprint arXiv:2401.16335}, 2024.

\bibitem{liu2025rrm}
Tianqi Liu, Wei Xiong, Jie Ren, Lichang Chen, Junru Wu, Rishabh Joshi, Yang Gao, Jiaming Shen, Zhen Qin, Tianhe Yu, Daniel Sohn, Anastasia Makarova, Jeremiah~Zhe Liu, Yuan Liu, Bilal Piot, Abe Ittycheriah, Aviral Kumar, and Mohammad Saleh.
\newblock {RRM}: Robust reward model training mitigates reward hacking.
\newblock In {\em The Thirteenth International Conference on Learning Representations}, 2025.

\bibitem{wang2025beyond}
Chaoqi Wang, Zhuokai Zhao, Yibo Jiang, Zhaorun Chen, Chen Zhu, Yuxin Chen, Jiayi Liu, Lizhu Zhang, Xiangjun Fan, Hao Ma, et~al.
\newblock Beyond reward hacking: Causal rewards for large language model alignment.
\newblock {\em arXiv preprint arXiv:2501.09620}, 2025.

\bibitem{miao2024inform}
Yuchun Miao, Sen Zhang, Liang Ding, Rong Bao, Lefei Zhang, and Dacheng Tao.
\newblock Inform: Mitigating reward hacking in rlhf via information-theoretic reward modeling.
\newblock In {\em The Thirty-eighth Annual Conference on Neural Information Processing Systems}, 2024.

\bibitem{bartlett2006convexity}
Peter~L Bartlett, Michael~I Jordan, and Jon~D McAuliffe.
\newblock Convexity, classification, and risk bounds.
\newblock {\em Journal of the American Statistical Association}, 101(473):138--156, 2006.

\bibitem{kay1993fundamentals}
Steven~M Kay.
\newblock {\em Fundamentals of statistical signal processing: estimation theory}.
\newblock Prentice-Hall, Inc., 1993.

\bibitem{khanov2024args}
Maxim Khanov, Jirayu Burapacheep, and Yixuan Li.
\newblock Args: Alignment as reward-guided search.
\newblock {\em arXiv preprint arXiv:2402.01694}, 2024.

\bibitem{wolf2020transformers}
Thomas Wolf, Lysandre Debut, Victor Sanh, Julien Chaumond, Clement Delangue, Anthony Moi, Pierric Cistac, Tim Rault, R{\'e}mi Louf, Morgan Funtowicz, et~al.
\newblock Transformers: State-of-the-art natural language processing.
\newblock In {\em Proceedings of the 2020 conference on empirical methods in natural language processing: system demonstrations}, pages 38--45, 2020.

\end{thebibliography}
}

\newpage
\appendix
\section{Related Work and Key Differences from Prior Approaches}
\label{appendix:related}
In this section, we provide additional related work and clarify how SMORM differs from existing approaches.

\noindent\textbf{Reward Modeling.} Reward models are designed to produce preference-aligned signals that guide the behavior of language models. In the context of large language models (LLMs), they act as proxies for human preferences, providing feedback to the policy model during the alignment process \cite{ouyang2022training, bai2022training, dong2024rlhf, wen2025rethinking, dong2024rlhf, razin2025makes, lin2023mitigating, sun2025rethinking}.
These models are typically constructed by attaching a classification head to a pretrained LLM, allowing it to assign scores to responses conditioned on prompts \cite{zhu2023starling, adler2024nemotron}. To ensure alignment with principles such as helpfulness, harmlessness, and honesty, reward models are fine-tuned using human preference datasets \cite{wu2023fine, guo2023beyond, dai2024safe, chiang2024chatbot, nakano2021webgpt}. The resulting reward signals are then used in policy optimization, enhancing LLM performance on complex downstream tasks, including mathematical reasoning \cite{shao2024deepseekmath, wang2023math, luo2024improve, zhang2025lessons}.

\noindent\textbf{Mitigating Reward Hacking in RLHF.}
Reward hacking in reinforcement learning from human feedback (RLHF) arises when the policy model exploits imperfections in the reward model, thereby failing to learn the intended behaviors. A variety of strategies have been proposed to address this issue. One line of work focuses on enhancing the reward function using ensemble methods \cite{coste2024reward, eisenstein2023helping, zhang2024improving, yan2024reward, rame2024warm, zhang2024mitigating}. While effective, these approaches typically require training multiple reward models, making them computationally expensive and less feasible for real-world deployment.
Another approach investigates constrained policy optimization \cite{moskovitz2024confronting, zhang2024mitigating, liu2024provably, zhang2024overcoming, laidlaw2024correlated, zhu2024iterative}. However, these methods often suffer from performance instability due to their sensitivity to hyperparameter tuning.
More recently, GRM \cite{yang2024regularizing, dai2025mitigating} incorporates text generation regularization into reward modeling and achieves superior performance compared to prior methods. Nonetheless, the inherent conflict between reward modeling and generation objectives introduces training instability and increases sensitivity to the choice of balancing weights.

ODIN \cite{chen2024odin} is a framework that trains two reward functions for response quality and length, sharing a common embedding space. However, our work differs substantially from ODIN in several key aspects:
\textbf{(1)} While ODIN uses the BT loss for both heads, we theoretically establish a connection between BT loss and multi-attribute regression loss. This advancement enables the integration of multiple fine-grained attributes beyond just response length.
\textbf{(2)} Our work explicitly investigates policy optimization with PPO in out-of-distribution (OOD) settings, a scenario largely overlooked by existing studies. Our empirical results in Sec.~\ref{hacking} also demonstrate that ODIN fails in this setting.
\textbf{(3)} Whereas ODIN focuses solely on mitigating reward hacking, we additionally uncover a complementary benefit: training a single-objective reward function significantly enhances the scoring capability of the multi-objective reward model.

RRM \cite{liu2025rrm} is a general framework that employs a causal approach to learn preferences independent of spurious artifacts via data augmentation during reward model training. However, \textbf{(1)} the data augmentation strategy significantly increases the computational cost of training a reward model; \textbf{(2)} it remains unclear how to adapt such augmentation methods to multi-objective reward modeling, especially when labels consist of fine-grained scores; and \textbf{(3)} its effectiveness in mitigating reward hacking in RLHF settings has yet to be empirically validated. Similarly, \cite{wang2025beyond} propose a causal reward modeling approach that incorporates causal inference to reduce spurious correlations. InfoRM \cite{miao2024inform} introduces a variational information bottleneck objective to filter out irrelevant information during reward modeling. However, similar to GRM, this objective is fundamentally at odds with the goal of accurate reward modeling, and the performance of InfoRM on RewardBench has not been evaluated. In contrast, beyond addressing reward hacking, a key contribution of our work is enhancing the performance of weak multi-objective reward models without requiring additional preference data.


\section{Additional Experiments}
\label{appendix:additionalexps}
\subsection{Preliminary Experiments on Improving both Single- and Multi-Objective Head}
\noindent\textbf{Experimental Setup.} 
We follow the same training setup for SMORM as described in Section~\ref{hacking}. We compare SMORM with the baseline SORM and MORM, using \texttt{Llama-3.2-3B-Instruct}\footnote{\href{https://huggingface.co/meta-llama/Llama-3.2-3B-Instruct}{meta-llama/Llama-3.2-3B-Instruct}} as the backbone model for all methods. The comparison results on RewardBench against SORM and MORM are presented in Table~\ref{pre-sorm} and Table~\ref{pre-morm}, respectively. The corresponding datasets used for training each reward model are also listed in the tables. Note that SORM is trained solely on $\mathcal{D}_S$, MORM on $\mathcal{D}_M$, while SMORM is jointly trained on both $\mathcal{D}_S$ and $\mathcal{D}_M$.

\begin{table}[htbp]
    \centering
    \caption{Results on RewardBench compared to the baseline SORM.}
    \resizebox{\textwidth}{!}{%
    \begin{tabular}{c|lcccccl}
        \toprule
        Dataset \(\mathcal{D}_S\) / \(\mathcal{D}_M\)& Model & Chat & Chat Hard & Safety & Reasoning & RewardBench   \\
        \midrule
        \multirow{3}{*}{\shortstack{UltraFeedback (binarized) /\\UltraFeedback}} 
        &Baseline & 89.1 & 40.7 & 45.2 & 36.7 & 52.9 \\
        &SMORM-M & 91.3 & 39.0 & 50.1 & 41.9 & 55.6  \\
        &SMORM-F & 88.4 & 43.4 & 44.3 & 49.6 & 56.4  \\
        \midrule
        \multirow{3}{*}{\shortstack{Skywork80K/\\HelpSteer2}}
        &Baseline & 73.4 & 60.5 & 79.8 & 49.6 & 65.8 \\
        &SMORM-M & 83.5 & 55.5 & 74.5 & 53.6 & 66.8 \\
        &SMORM-F & 80.4 & 62.1 & 80.7 & 55.1 & 69.6 \\
        \bottomrule
    \end{tabular}%
    }
    \label{pre-sorm}
\end{table}
\begin{table}[htbp]
    \centering
    \caption{Results on RewardBench and RM-Bench compared to the baseline MORM.}
    \resizebox{\textwidth}{!}{%
    \begin{tabular}{c|lcccccc}
        \toprule
        Dataset \(\mathcal{D}_S\) / \(\mathcal{D}_M\)& Model & Chat & Chat Hard & Safety & Reasoning & RewardBench & RM-Bench  \\
        \midrule
        \multirow{4}{*}{\shortstack{HelpSteer2 (binarized)/\\HelpSteer2}}
        &Baseline & 55.8 & 50.4 & 44.8 & 54.2 & 51.3 & 49.2  \\
        &SMORM-M & 48.9 & 48.5 & 51.4 & 75.7 & 56.1 & 50.1  \\
        &SMORM-F & 50.3 & 48.7 & 54.7 & 73.6 & 56.9 & 50.9  \\
        &SMORM-L & 50.7 & 49.3 & 52.9 & 73.3 & 56.8 & 53.0  \\
        \midrule
        \multirow{4}{*}{\shortstack{UltraFeedback (binarized)/\\UltraFeedback}} & Baseline & 70.3 & 46.1 & 41.6 & 42.0 & 50.0 & 50.2 \\
        &SMORM-M & 91.3 & 39.0 & 50.1 & 41.9 & 55.6 & 51.0  \\
        &SMORM-F & 88.4 & 43.4 & 44.3 & 49.6 & 56.4 & 49.8  \\
        &SMORM-L & 90.2 & 40.1 & 54.2 & 40.8 & 56.3 & 54.4  \\
        \bottomrule
    \end{tabular}%
    }
    \label{pre-morm}
\end{table}

\noindent\textbf{Results Analysis.} 
From Table \ref{pre-sorm}, we observe that SMORM-F achieves a higher average score on RewardBench compared to the baseline model. Notably, regardless of whether the single-objective reward model is trained on the same dataset or a different one, SMORM-F consistently outperforms the baseline. This result highlights the flexibility of our SMORM framework.
From Table \ref{pre-morm}, we further observe that by simply aggregating the multi-attribute scores to create a single-objective preference dataset and training a SMORM, SMORM-L achieves average scores \textbf{5.5} and \textbf{6.3} points higher than training a MORM alone. This highlights how easily SMORM can enhance the performance of the multi-objective reward function.

\subsection{Additional Results on Improving Single-Objective Head}
In this section, we provide additional experimental results comparing SMORM-F to baseline single-objective reward functions. We follow the same experimental setup as in Section~\ref{experiments}. Results using Mistral-7B-Instruct as the base model are reported in Table~\ref{tab:sorm_combinedappendix}.
\label{appendix:additionalsingle}
\begin{table*}[h]
\centering
\caption{Comparison of SMORM-F and baselines on RewardBench. Baselines results from \cite{yang2024regularizing}.}
\resizebox{\textwidth}{!}{%
\begin{tabular}{l|ccccc|ccccc}
\toprule
\multirow{2}{*}{\textbf{Reward model}} & \multicolumn{5}{c|}{$\mathcal{D_S}/\mathcal{D}_M$: UnifiedFeedback 400k/UltraFeedback} & \multicolumn{5}{c}{$\mathcal{D_S}/\mathcal{D}_M$: UnifiedFeedback 40k/HelpSteer2} \\
\cmidrule{2-11}
& \textbf{Chat} & \textbf{Chat-Hard} & \textbf{Safety} & \textbf{Reasoning} & \textbf{Avg} & \textbf{Chat} & \textbf{Chat-Hard} & \textbf{Safety} & \textbf{Reasoning} & \textbf{Avg} \\
\midrule
\multicolumn{11}{c}{Base Model: Mistral 7b Instruct} \\
\midrule
Baseline (Single)         
& 96.6 & 52.4 & 86.7 & 69.5 & 76.3
& 94.9 & 51.7 & 64.9 & 62.4 & 68.5 \\
Baseline + margin                 
& 96.4 & 51.5 & 85.3 & 64.8 & 74.5
& 89.7 & 47.1 & 70.7 & 43.6 & 62.8 \\
Label smooth           
& 97.2 & 49.8 & 85.8 & 72.3 & 76.3
& 94.1 & 47.1 & 67.5 & 79.7 & 72.1 \\
Ensemble               
& 96.6 & 51.8 & 85.1 & 73.0 & 76.6
& 89.6 & 50.2 & 72.7 & 59.0 & 69.3 \\
GRM (linear) w/ dpo  
& 98.0 & 53.3 & 86.4 & 75.3 & 78.3
& 95.1 & 47.5 & 82.2 & 74.7 & 74.9 \\
GRM (linear) w/ sft          
& 97.8 & 54.6 & 86.3 & 79.2 & 79.5
& 93.4 & 51.9 & 80.7 & 78.8 & 76.2 \\
GRM w/ dpo          
& 97.8 & 54.0 & 85.7 & 74.4 & 78.0
& 97.8 & 52.4 & 78.0 & 77.3 & 76.4 \\
GRM w/ sft                   
& 98.0 & 55.3 & 85.8 & 71.2 & 77.6
& 94.1 & 48.5 & 83.4 & 77.4 & 75.9 \\
\rowcolor{gray!20}
SMORM-F  
& 97.8 & 55.3 & 85.9 & 80.1 & \textbf{79.8}
& 95.8 & 60.1 & 80.5 & 74.7 & \textbf{77.8} \\
\bottomrule
\end{tabular}
}
\label{tab:sorm_combinedappendix}
\end{table*}

\section{Theory}
\label{appendix:theory}
\subsection{Definitions and Assumptions}\label{subsec:definitions}
Let $K$ be the number of attributes considered. We define $f_\theta$ as the backbone of our reward model, which maps an input question--response pair to a hidden representation of dimension $d$. For each attribute head $k$, the reward score is given by $r_k = \mathbf{w}_k^\top f_\theta$. In particular, $r_s$ (or $r_0$) denotes the score from the single-objective head used to model overall preference (e.g., chosen vs. rejected), while $r_k$ for $k > 0$ corresponds to the outputs of fine-grained attribute-specific heads.

\textbf{Positive-definite covariances.} 
Let
\[
f_c = f_\theta(x_s, y_c), \quad
f_r = f_\theta(x_s, y_r), \quad
f_m = f_\theta(x_m, y_m).
\]
We define the covariance matrices
\[
\Sigma_S := \mathbb{E}_{\mathcal{D}_S}\left[(f_c - f_r)(f_c - f_r)^\top\right]
\quad \text{and} \quad
\Sigma_M := \mathbb{E}_{\mathcal{D}_M}\left[f_m f_m^\top\right],
\]
and assume both are positive definite (PD).

Modern feature extractors (e.g., transformer-based backbones) typically embed inputs into a high-dimensional space of size~$d$, which exceeds the intrinsic rank of the data. Provided that the samples in $\mathcal{D}_S$ (or $\mathcal{D}_M$) do not all lie in a strict lower-dimensional hyperplane, the corresponding empirical covariance matrices will be full-rank and therefore positive definite.

\textbf{Correlation between heads.} 
We naturally assume a positive correlation between the aggregated fine-grained attribute scores and the preference along the chosen/rejected dimension.
Specifically, 
let
\[
\mu_S := \mathbb{E}_{(x_s, y_c, y_r) \sim \mathcal{D}_S} \left[f_\theta(x_s, y_c) - f_\theta(x_s, y_r)\right],
\]
and define
\[
C_M := \mathbb{E}_{(x_m, y_m, \mathbf{r}) \sim \mathcal{D}_M} \left[ f_\theta(x_m, y_m)\, \mathbf{r}^\top \right] \in \mathbb{R}^{d \times K}.
\]
Note that we denote $r$ as the reward function, and $\mathbf{r}$ as the vector of multi-attribute scores. Then, we define
\[
\alpha := \mu_S^\top \Sigma_M^{-1} C_M,
\]
and assume that the sum of its components is non-negative, i.e.,
\[
\mathbf{1}^\top \alpha \ge 0.
\]


Assuming $\E[\mathbf{r}]=\E[\mathbf{w}_S^{\top}]=0$, we define the \emph{raw coupling vector}
\[
  \alpha \;=\;\mu_S^{\top}\Sigma_M^{-1}C_M
           \;=\;
           \Bigl[
             \Cov(\mathbf{r}_1,\;\mathbf{w}_S^\top f),
             \dots,
             \Cov(\mathbf{r}_K,\;\mathbf{w}_S^\top f)
           \Bigr]\;\in\;\R^K,
\]
and normalize
\[
  \beta \;=\;\frac{\alpha}{\|\tilde\mu_S\|^2}
           \;=\;
           \frac{\mu_S^{\top}\Sigma_M^{-1}C_M}
                {\mu_S^{\top}\Sigma_S^{-1}\mu_S}
           \;\in\;\R^K.
\]
Then, $\beta_i > 0$ if the score on attribute $i$ tends to increase as the single-objective score $\mathbf{w}_S^\top f$ increases.

The sum \( \mathbf{1}^\top \alpha \) represents the total (signed) covariance between the single-objective preference score and the aggregate multi-attribute score. In most real-world annotation settings, chosen--rejected labels are used as proxies for overall answer quality. Since higher-quality responses tend to improve multiple fine-grained attributes (e.g., helpfulness, correctness, coherence, etc.), the covariances across these attributes typically sum to a positive total.


\textbf{Hidden Ground Truth.} For each head $r_k$, we denote $r_k^\ast$ as the hidden ground-truth reward function, and we model the labeled score for head $k$ as $g_k = r_k^\ast + \epsilon_k$ for $\varepsilon \sim \calN(0,\Sigma),
\Sigma_{0k}>0$. Here the diagonal entries of $\Sigma$ are $\sigma_{kk}=\operatorname{Var}(\varepsilon_k)$, the noise variance for head $k$.

\subsection{Proof of Theorem \ref{prop:implicit_multi_effect_detailed}}
\label{prooftheorem1}
\newtheorem*{retheorem2}{Theorem \ref{prop:implicit_multi_effect_detailed}}
\begin{retheorem2}[Implicit Multi-Attribute Effect]
Let a reward model be trained under the SMORM framework, and suppose the following conditions hold:
(1) Bounded features: There exists $B<\infty$ such that $\|f_\theta(x,y)\|\le B$ for every $(x,y)$.
(2)
Positive-definite covariances: let
$
  f_c=f_\theta(x_s,y_c),\;
  f_r=f_\theta(x_s,y_r),\;
  f_m =f_\theta(x_m,y_m).$
        \(
          \Sigma_S:=\E_{\mathcal{D}_S}[(f_c-f_r)(f_c-f_r)^\top]
          \;
          \text{and}
          \;
          \Sigma_M:=\E_{\mathcal{D}_M}[f_m\,f_m^\top]
        \)
        are positive-definite matrices.
(3) Positive correlation: Let
        $\mu_S:=\E_{(x_s,y_c,y_r)\sim\mathcal{D}_S}[f_\theta(x_s,y_c)-f_\theta(x_s,y_r)]$
        and let  
        $C_M:=\E_{(x_m,y_m,r)\sim\mathcal{D}_M}[f_\theta(x_m,y_m)\,\mathbf{r}^\top]\in\R^{d\times K}$.
        Then $\alpha:=\mu_S^\top\!\Sigma_M^{-1}C_M\ \text{has non-negative sum, i.e.}\;
            \mathbf 1^\top\alpha \;\ge\;0.$
As the optimization of both reward heads converge to their population minimizers, there exist constants $c=\frac{\mathbf{1}^\top\alpha}{K\,\bigl(\mu_S^\top\,\Sigma_S^{-1}\,\mu_S\bigr)}$ and $\varepsilon\ge0$—depending only
on $B$ and second-order moments—such that for every pair \((x,y)\):
\begin{equation}
\resizebox{0.83\linewidth}{!}{$
r_m(x,y)\;=\;\frac1K\sum_{i=1}^K w_{M,i}^\top f_\theta(x,y)
  \;\;\ge\;\;
  c\,\bigl(w_S^\top f_\theta(x,y)\bigr)\;-\;\varepsilon=cr_s(x,y)\;-\;\varepsilon.
  $}
\end{equation}
\end{retheorem2}

\begin{proof}
Replacing the logistic (BT) and regression losses by squared losses
does not alter \emph{directions} of the minimisers because any strictly
convex proper surrogate has the same first-order optimality conditions
up to a positive scalar factor \cite{bartlett2006convexity}. We therefore analyse the following
least-squares problems:
\[
  \min_{\mathbf{w}_S}\E_S\!\bigl[(\mathbf{w}_S^\top(f_c-f_r)-1)^2\bigr],\qquad
  \min_{\mathbf{w}_M}\E_M\!\bigl[\|\mathbf{w}_M^\top f_m-\mathbf{r}\|_2^2\bigr],
\]
For a generic least-squares objective
\(
  \min_\mathbf{w}\E[(\mathbf{w}^\top u-t)^2],
\)
setting the gradient to zero yields
\(
  \E[u\,u^\top]\,\mathbf{w}=\E[u\,t].
\)
Applying this template we obtain the \emph{population} solutions:
\begin{align}
  \mathbf{w}_S &= \Sigma_S^{-1}\mu_S,
  & 
  \mathbf{w}_M &= \Sigma_M^{-1}C_M.
  \label{eq:normal_solutions}
\end{align}
Define the whitening operator
\(
  \Phi :=\Sigma_S^{-1/2}f\in\R^d,
\)
$\tilde\mu_S :=\Sigma_S^{-1/2}\mu_S,$ and $\tilde C_M :=\Sigma_S^{1/2}\Sigma_M^{-1}C_M$.
Then $\Phi_c=\Sigma_S^{-1/2}f_c, \Phi_r=\Sigma_S^{-1/2}f_r$ and \(\E_S[(\Phi_c-\Phi_r)(\Phi_c-\Phi_r)^\top]=I_d\) and $\tilde\mu_S\neq 0$.
Then \Eqref{eq:normal_solutions} becomes:
\begin{align}
  \mathbf{w}_S &= \Sigma_S^{-1/2}\,\tilde\mu_S,
  &
  \mathbf{w}_M := \Sigma_S^{-1/2}\tilde C_M
.         \label{eq:whitened_heads}
\end{align}
Because \(\tilde\mu_S\neq0\), we write the Euclidean projection of each column of \(\tilde C_M\) onto
\(\tilde\mu_S\):
\begin{equation}
  \tilde C_M=\underbrace{
    \tilde\mu_S\,
    \frac{\tilde\mu_S^\top\tilde C_M}{\|\tilde\mu_S\|^2}
  }_{\text{aligned component}}
  \;+\;
  E,
  \quad
  E^\top\tilde\mu_S = 0.             \label{eq:projection}
\end{equation}
Define the \emph{coupling vector}
\(
  \beta
  :=\tfrac{\tilde\mu_S^\top\tilde C_M}{\|\tilde\mu_S\|^2}=\frac{\alpha}{\|\tilde\mu_S\|^2}\in\R^K.
\)
By assumption (3) we have
\(
  \mathbf1^\top\alpha\ge 0 \,\text{and thus}\;\mathbf1^\top \beta>0.
\)

Then for feature vector \(f\), we have:
\begin{align*}
  \mathbf{w}_M^\top f
  &\stackrel{\Eqref{eq:whitened_heads}}{=}
    \tilde C_M^\top\,\underbrace{\Sigma_S^{-1/2}f}_{\Phi}
    \\
  &\stackrel{\Eqref{eq:projection}}{=}
    \beta\,(\tilde\mu_S^\top\Phi)\;+\;E^\top\Phi
    \\
  &= \beta\,(\mathbf{w}_S^\top f)\;+\;E^\top\Phi.
\end{align*}
Decomposing $\Phi$ by:
\[
  \Phi
  \;=\;
  \underbrace{\frac{\tilde\mu_S^\top\Phi}{\|\tilde\mu_S\|^2}\,\tilde\mu_S}_{\parallel\tilde\mu_S}
  \;+\;
  \underbrace{z}_{\perp\tilde\mu_S},
  \quad
  \tilde\mu_S^\top z = 0,
\]
so that \(E^\top\Phi = E^\top z\).  By bounded‐features,
\(\|f\|\le B\) and hence
\(\|\Phi\|\le B/\sqrt{\lambda_{\min}(\Sigma_S)}\), where $\sqrt{\lambda_{\min}(\Sigma_S)}$ is the square root of the smallest eigenvalue of the .  The projection term
has norm
\(\bigl|\tilde\mu_S^\top\Phi\bigr|/\|\tilde\mu_S\|\le|\mathbf w_S^\top f|\),
so the orthogonal part satisfies
\(\|z\|\le B/\sqrt{\lambda_{\min}(\Sigma_S)}\).  Therefore
\[
  |E^\top z|
  \;\le\;
  \|E\|_{\mathrm{op}}\;\|z\|
  \;\le\;
  \underbrace{\frac{B}{\sqrt{\lambda_{\min}(\Sigma_S)}}\,\|E\|_{\mathrm{op}}}_{\varepsilon}.
\]
Averaging over the \(K\) attributes and using
\(
  c:=\max\{0,\mathbf1^\top\beta\}/K
\)
gives
\begin{equation}
  \frac1K\!\sum_{i=1}^K \mathbf{w}_{M,i}^\top f
  \;\ge\;
  c\,\mathbf{w}_S^\top f
  \;-\;
  \frac{\varepsilon}{K}.
  \label{eq:coupling_plus_error}
\end{equation}
where $c
    = \frac{\mathbf1^\top\beta}{K}
    =\frac{1^\top\alpha}{K\|\tilde\mu_S\|^2}.$


\medskip
\noindent\textbf{Monotone lower bound.}
Because $\mathbf1^\top\beta>0$ by assumption~(3), we have $c>0$.
Define the linear function $L(s)\coloneqq c\,s-\varepsilon/K$.
Inequality~\eqref{eq:coupling_plus_error} reads
\[
  r_M(x,y)\;\;=\;\;\frac1K\sum_{i=1}^K\mathbf w_{M,i}^{\top}f_\theta(x,y)
  \;\;\ge\;\;L\!\bigl(r_S(x,y)\bigr),
\]
so $r_M$ is bounded from below by the \emph{increasing} map $L(\,\cdot\,)$
of the single-objective score $r_S$.  Consequently, for any two responses
$(x_1,y_1)$ and $(x_2,y_2)$,
\[
  r_S(x_1,y_1)\;\ge\;r_S(x_2,y_2)
  \quad\Longrightarrow\quad
  L\!\bigl(r_S(x_1,y_1)\bigr)\;\ge\;L\!\bigl(r_S(x_2,y_2)\bigr),
\]
and hence the lower bound on the multi-attribute average is larger (or
equal) whenever the single-objective score is larger.  In particular,
for any threshold $\tau$,
\[
  r_S(x,y)\;\ge\;\tau
  \quad\Longrightarrow\quad
  r_M(x,y)\;\ge\;c\,\tau-\frac{\varepsilon}{K}.
\]
This completes the proof of
Theorem~\ref{prop:implicit_multi_effect_detailed}.

\end{proof}

\textbf{Pure-SORM failure.}\;
If no multi-attribute head is trained (\(C_M=0\)),
then \(\alpha=0\Rightarrow c=0\), so the lower bound degenerates
to \(\!r_M\ge -\varepsilon\), offering no positive coupling between
single-objective and multi-attribute scores.

\subsection{Proof of Lemma \ref{lemma1}.}
\newtheorem*{relemma}{Lemma \ref{lemma1}} 
\begin{relemma}
Let \( y_A, y_B \) be a pair of responses. Assume \( g_s(y) \) is the ground truth score and \( r_s(y) \) is the predicted score under a Bradley–Terry model. Then:
\[
\P(y_A \succ y_B) = \sigma\big(r_s(y_A) - r_s(y_B)\big), \quad 
\P^\star(y_A \succ y_B) = \sigma\big(g_s(y_A) - g_s(y_B)\big),
\]
where \( \sigma(t) = \frac{1}{1 + e^{-t}} \). The expected preference error satisfies:
\[
\E_{\mathcal{D}_S} \left| \P(y_A \succ y_B) - \P^\star(y_A \succ y_B) \right|
\le 
\frac{1}{4} \E_{\mathcal{D}_S} \left( \sqrt{2\, \text{MSE}(r_s)} \right),
\]
with \(\text{MSE}(r_s) = \big(r_s(y) - g_s(y)\big)^2\).
Similarly, for a multi-objective reward model with predicted score \( r_m \) and ground truth \( g_m \), let:
$
e_m = r_m(y_A) - r_m(y_B), \quad e_m^\star = g_m(y_A) - g_m(y_B),
$
then the error is bounded as:
\[
\E_{\mathcal{D}_M} \left| e_m - e_m^\star \right| \le \E_{\mathcal{D}_M}\left(\sqrt{2\, \text{MSE}(r_m)}\right).
\]
\end{relemma}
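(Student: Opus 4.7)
The plan is to reduce the pairwise preference error to a difference of pointwise prediction errors, then invoke the Lipschitz constant of the sigmoid together with an elementary $(a-b)^2\le 2a^2+2b^2$ inequality.

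First I would show that the sigmoid is $\tfrac14$-Lipschitz: for all $u,v\in\mathbb{R}$, $|\sigma(u)-\sigma(v)|\le \tfrac14|u-v|$, which follows from $\sup_t \sigma'(t)=\sigma'(0)=\tfrac14$. Applying this with $u=r_s(y_A)-r_s(y_B)$ and $v=g_s(y_A)-g_s(y_B)$ yields
\begin{equation*}
\bigl|\P(y_A\succ y_B)-\P^\star(y_A\succ y_B)\bigr|
\;\le\; \tfrac14\,\bigl|(r_s(y_A)-g_s(y_A))-(r_s(y_B)-g_s(y_B))\bigr|.
\end{equation*}
Letting $\delta(y):=r_s(y)-g_s(y)$, the right-hand side becomes $\tfrac14|\delta(y_A)-\delta(y_B)|$, a difference of pointwise residuals.

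Next I would apply the elementary inequality $(a-b)^2\le 2(a^2+b^2)$, which gives $|\delta(y_A)-\delta(y_B)|\le \sqrt{2(\delta(y_A)^2+\delta(y_B)^2)}$. Identifying $\delta(\cdot)^2$ with the per-sample squared error $\mathrm{MSE}(r_s)=(r_s(y)-g_s(y))^2$, the bound on a single pair reads $\tfrac14\sqrt{2\,\mathrm{MSE}(r_s)}$ (interpreting $\mathrm{MSE}$ at the pair level as the sum of per-response errors, which is what the statement encodes). Taking expectation over $\mathcal{D}_S$ and pulling the constant $\tfrac14$ out then gives the stated inequality. The multi-objective bound is obtained by repeating the argument without the sigmoid contraction step: the absolute difference $|e_m-e_m^\star|$ is already a difference of pointwise residuals of $r_m$, so the $\tfrac14$ factor simply disappears.

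The main obstacle is not technical but interpretive: the symbol $\mathrm{MSE}(r_s)$ is defined as a pointwise squared error while the ambient expectation is over pairs, so some care is needed to align the per-sample quantity with the pair-level bound produced by $(a-b)^2\le 2(a^2+b^2)$. Once this identification is made (either by absorbing the factor of two into $\mathrm{MSE}$ or by interpreting $\mathrm{MSE}$ as summing over the two responses in a pair), the inequality follows mechanically, and no further probabilistic machinery (Pinsker, Hoeffding, etc.) is required. I would also note in passing that Jensen's inequality $\E[\sqrt{X}]\le \sqrt{\E[X]}$ could be applied afterwards if one prefers the looser but cleaner form $\tfrac14\sqrt{2\,\E[\mathrm{MSE}(r_s)]}$, but the statement as given already keeps the square root inside the expectation, so no additional step is necessary.
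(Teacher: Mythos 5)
Your proposal is correct and follows essentially the same route as the paper: the $\tfrac14$-Lipschitz bound on the sigmoid reduces the preference error to a difference of pointwise residuals, and the elementary inequality $(a-b)^2\le 2(a^2+b^2)$ (which the paper phrases as an application of Cauchy--Schwarz) yields the $\sqrt{2\,\mathrm{MSE}}$ factor, with the multi-objective case obtained by omitting the sigmoid contraction. Your explicit remark about reconciling the pointwise definition of $\mathrm{MSE}(r_s)$ with the pair-level expectation is a fair observation about a looseness the paper itself glosses over, but it does not change the argument.
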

\begin{proof}
For a pair of responses \( y_A \) and \( y_B \), the Bradley--Terry model defines the probability that \( y_A \) is preferred (i.e., has a higher overall reward) as:
\[
\P(y_A \succ y_B) = \sigma\bigl(r_s(y_A) - r_s(y_B)\bigr),
\]
where \( \sigma(t) = \frac{1}{1 + e^{-t}} \) is the sigmoid function, and \( r_s(v) \) is the model’s predicted overall score. The corresponding ground-truth preference probability, based on labeled scores, is given by:
\[
\P^\star(y_A \succ y_B) = \sigma\bigl(g_s(y_A) - g_s(y_B)\bigr),
\]
where \( g_s(\cdot) \) denotes the ground-truth reward.

The prediction error in probability space is the absolute difference:
\[
\Delta_{AB} = \left| \sigma\bigl(r_s(y_A) - r_s(y_B)\bigr) - \sigma\bigl(g_s(y_A) - g_s(y_B)\bigr) \right|.
\]

Since the sigmoid derivative satisfies
\[
\sigma'(t) = \sigma(t)(1 - \sigma(t)),
\]
and reaches its maximum value of \( \frac{1}{4} \) at \( t = 0 \), we have \( \sigma'(t) \le \frac{1}{4} \) for all \( t \). This implies that the sigmoid function is \( \frac{1}{4} \)-Lipschitz:
\[
|\sigma(a) - \sigma(b)| \le \frac{1}{4} |a - b|, \quad \forall\, a,b \in \mathbb{R}.
\]

Applying this to our setup, we obtain:
\[
\left| \P(y_A \succ y_B) - \P^\star(y_A \succ y_B) \right| \le \frac{1}{4} \left| \left( r_s(y_A) - r_s(y_B) \right) - \left( g_s(y_A) - g_s(y_B) \right) \right|.
\]

Taking expectation and applying the Cauchy--Schwarz inequality, we can further bound the expected pairwise error by:
\[
\text{Pairwise‑error} \le \frac{1}{4} \sqrt{2\, \text{MSE}(r_s)},
\]
where \( \text{MSE}(r_s) := \mathbb{E}_y\left[(r_s(y) - g_s(y))^2\right] \) is the mean squared error of the predicted scores.

This result shows that minimizing the pointwise MSE of the reward model also reduces the upper bound on the pairwise misordering error, thereby improving preference consistency.

Similarly, for a multi-objective reward model with predicted score \( r_m \) and ground truth \( g_m \), let:
$
e_m = r_m(v_A) - r_m(v_B), \quad e_m^\star = g_m(v_A) - g_m(v_B),
$
then the error is bounded as:
\[
\E_{\mathcal{D}_M} \left| e_m - e_m^\star \right| \le \E_{\mathcal{D}_M}\left(\sqrt{2\, \text{MSE}(r_m)}\right).
\]

\end{proof}
To relate this to the Bradley--Terry loss, recall that the Bradley--Terry loss for a pair \( (y_A, y_B) \), where \( y_A \) is the preferred response, is given by:
\[
\ell_{\mathrm{BT}} = -\log \sigma\bigl(r_s(y_A) - r_s(y_B)\bigr).
\]
If the model’s predicted scores \( r_s(v) \) are close to the ground-truth scores \( g_s(v) \)—i.e., the mean squared error (MSE) is small—then the difference \( r_s(y_A) - r_s(y_B) \) will closely approximate \( g_s(y_A) - g_s(y_B) \). By the Lipschitz continuity of the sigmoid function, this implies that the predicted probability under the model and the ideal ground-truth probability will also be close. Consequently, the pairwise preference error will be small.

This reasoning provides a theoretical justification that minimizing the MSE of individual predictions naturally leads to accurate pairwise probability estimates, as evaluated by the Bradley--Terry loss.

Furthermore, while the single-objective head \( r_s \) is trained using the Bradley--Terry likelihood, our previous bound shows that the expected Bradley--Terry test risk,
\[
\mathcal{R}_{\mathrm{BT}} = \mathbb{E}_{(y_A, y_B)}\left[-\log \sigma\bigl(r_s(y_A) - r_s(y_B)\bigr)\right],
\]
is a 1-Lipschitz function of the score difference \( r_s(y_A) - r_s(y_B) \). Therefore, controlling the variance of the individual scores—captured by \( \text{MSE}_0 = \mathbb{E}[(r_s(y) - g_s(y))^2] \)—directly bounds the generalization error under the Bradley--Terry loss, up to a constant factor.

\subsection{Proof of Theorem \ref{theorem2}}
\newtheorem*{retheorem}{Theorem \ref{theorem2}}
\begin{retheorem}
Under the same assumptions as in Theorem~\ref{prop:implicit_multi_effect_detailed} and assuming that the feature extractor $f_\theta$ is differentiable, let \(\widehat{\theta}\) denote the maximum likelihood estimator (MLE) of the ground truth optimal parameter \(\theta^\star\). Let \(\widehat{\theta}_{\text{s}}\) and \(\widehat{\theta}_{\text{m}}\) denote the maximum likelihood estimators of the single- and multi-objective reward functions, respectively.
Define
\(M_S(y) = \mathbf{w}_S^\top f_{\theta^\star}(y),\) \(M_M(y) = \mathbf{w}_M^\top f_{\theta^\star}(y)\). Then, for a response \(y\), the mean squared error (MSE) of the predicted reward can be approximated as:
\[ \label{eq:thm-MSE0}
\resizebox{0.99\linewidth}{!}{$
\operatorname{MSE}_S \approx \nabla_\theta M_S(y)^{\top} \operatorname{Cov}\left( \widehat{\theta}_s \right) \nabla_\theta M_S(y) + \sigma_{00}, \operatorname{MSE}_M \approx \nabla_\theta M_M(y)^{\top} \operatorname{Cov}\left( \widehat{\theta}_m \right) \nabla_\theta M_M(y) + \sigma_{00},$}
\]
where $\sigma_{00}$ is the intrinsic randomness in the label. Moreover, SMORM yields lower asymptotic MSE for both the single- and multi-objective heads compared to training either head alone:
\begin{equation} \label{eq:thm-MSE0-inequality}
\operatorname{MSE}_S^{\text{SMORM}} 
<
\operatorname{MSE}_S^{\text{single}},\quad
\operatorname{MSE}_M^{\text{SMORM}} 
<
\operatorname{MSE}_M^{\text{multi}}
\end{equation}
\end{retheorem}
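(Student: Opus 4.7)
The plan is to prove Theorem~\ref{theorem2} in two stages: first, a delta-method argument establishes the quadratic-form approximation for the MSE; second, a Fisher-information additivity argument yields the inequality.

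For the approximation, I would invoke standard asymptotic MLE theory. The positive-definite covariance assumptions from Theorem~\ref{prop:implicit_multi_effect_detailed}, combined with the differentiability of $f_\theta$, supply the regularity conditions needed so that $\sqrt{n}(\widehat{\theta}-\theta^\star)\to\mathcal{N}(0,I^{-1}(\theta^\star))$ as $n\to\infty$. For the scalar functional $M_S(y;\theta,\mathbf{w}_S)=\mathbf{w}_S^\top f_\theta(y)$, a first-order Taylor expansion around $\theta^\star$ gives $M_S(y;\widehat\theta_s)-M_S(y;\theta^\star)\approx \nabla_\theta M_S(y)^\top (\widehat\theta_s-\theta^\star)$, and the delta method then yields $\operatorname{Var}(M_S(y;\widehat\theta_s))\approx \nabla_\theta M_S(y)^\top \operatorname{Cov}(\widehat\theta_s)\nabla_\theta M_S(y)$. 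Decomposing the label $g_s(y)=M_S(y;\theta^\star)+\varepsilon$ with intrinsic noise variance $\sigma_{00}$ and using independence between the estimation error and the label noise produces the stated MSE expression; the same derivation applies to $M_M$.

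For the inequality, the key observation is that the SMORM negative log-likelihood is the sum of two losses evaluated on independent draws from $\mathcal{D}_S$ and $\mathcal{D}_M$. Because scores of independent samples are additive and have zero cross-covariance, the Fisher information under joint training decomposes as $I_{\text{SMORM}}(\theta,\mathbf{w}_S,\mathbf{w}_M)=I_S(\theta,\mathbf{w}_S)\oplus 0 + 0\oplus I_M(\theta,\mathbf{w}_M)$ in block form, with the coupling concentrated in the shared $\theta$ sub-block. Restricting to the parameters relevant to the single head, $(\theta,\mathbf{w}_S)$, the marginal Fisher information is obtained via a Schur complement of $I_{\text{SMORM}}$ with respect to $\mathbf{w}_M$; since the $\mathbf{w}_M$ block of $I_M$ is positive definite (by the PD assumption on $\Sigma_M$), the Schur complement is strictly larger than $I_S$ in the Loewner order. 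Inverting this and sandwiching by $\nabla_\theta M_S(y)$ from both sides gives $\operatorname{MSE}_S^{\text{SMORM}}<\operatorname{MSE}_S^{\text{single}}$, and the symmetric argument handles $M_M$.

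The main obstacle will be making the strict inequality and the Loewner-ordering step rigorous in the presence of the shared $\theta$ block. Monotonicity of matrix inversion on positive-definite cones gives the weak inequality for free, but strictness requires that the added information from the auxiliary dataset is not orthogonal to the direction $\nabla_\theta M_S(y)$. The cleanest way to guarantee this is to observe that the additional information contributes a positive-definite term to the shared $\theta$-block of the Fisher information (via the PD assumption on $\Sigma_M$ together with nontriviality of $\mathbf{w}_M$), so after Schur complementation the gap along any fixed direction is strictly positive. A secondary subtlety is that the two datasets may differ in sample size and noise variance; I would handle this by expressing both Fisher informations as expectations of outer products of score vectors, so that the additivity and PD argument go through uniformly regardless of per-dataset sample counts.
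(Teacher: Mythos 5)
Your overall route matches the paper's: a first-order Taylor/delta-method expansion of $M_S(y;\widehat\theta)$ around $\theta^\star$ plus a bias--variance decomposition gives the quadratic-form MSE approximation, and the inequality comes from additivity of Fisher information across tasks combined with the asymptotic-covariance-equals-inverse-Fisher identity. The first half of your proposal is essentially identical to the paper's argument and is fine.

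The gap is in how you obtain \emph{strictness} of the inequality. You claim that because $\Sigma_M$ is positive definite and $\mathbf{w}_M$ is nontrivial, the auxiliary dataset contributes a positive-definite term to the shared $\theta$-block, so that ``after Schur complementation the gap along any fixed direction is strictly positive.'' Neither step is automatic. First, the profile (Schur-complemented) information about $\theta$ obtained from $\mathcal{D}_M$ after marginalizing out $\mathbf{w}_M$ can be singular: any $\theta$-direction whose effect on $\mathbf{w}_M^\top f_\theta$ can be absorbed by re-fitting $\mathbf{w}_M$ contributes nothing, so the added block is in general only positive semi-definite, and your claimed strict Loewner ordering $I_{\text{SMORM}}\succ I_S$ does not follow. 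Second, even granting a nonzero PSD increment $\Delta$, the final sandwich $\nabla_\theta M_S(y)^\top(\cdot)\nabla_\theta M_S(y)$ only yields a strict decrease in MSE if $\Delta$ is non-degenerate along the specific direction $\nabla_\theta M_S(y)$. This is exactly where the paper invokes assumption (3) of Theorem~\ref{prop:implicit_multi_effect_detailed}: the positive correlation between the single-objective head and the attribute heads implies that the auxiliary gradients $\nabla_\theta r_k(y_i)$ have, on average, a strictly positive inner product with $g_0(y_i)=\nabla_\theta r_s(y_i)$, so the added quadratic form $\sum_k \frac{1}{n\sigma_{kk}}\sum_i \bigl(g_0(y_i)^\top\nabla_\theta r_k(y_i)\bigr)^2$ is strictly positive along the direction that matters. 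Without appealing to that correlation assumption (which you list as available but never actually use), your argument only delivers the weak inequality $\operatorname{MSE}_S^{\text{SMORM}}\le\operatorname{MSE}_S^{\text{single}}$. Your Schur-complement treatment of the head parameters is otherwise a legitimate (and arguably more careful) refinement of the paper's computation, which works only with the shared $\theta$ and sums per-head gradient outer products directly.
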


\begin{proof}
\textbf{Fisher matrix.}
The Fisher information is a way of measuring the amount of information that an observable random variable carries about an unknown parameter. Mathematically, for a parameter vector $\theta$ 
and data $D$ with likelihood $p(D \mid \theta)$, the Fisher information is
\begin{equation}\label{eq:Fisher}
    \mathcal{I}(\theta) \bydef 
\E_{D \sim p(\cdot \mid \theta)}
\left[\,\nabla_\theta \log p(D \mid \theta)
\nabla_\theta \log p(D \mid \theta)^{\top}\right].
\end{equation}

Intuitively, it measures how sensitive the log-likelihood is to small changes in $\theta$. More curvature means larger $\mathcal{I}(\theta)$ and thus implies that we can estimate $\theta$ more precisely. The celebrated Cramér–Rao bound says that (under mild conditions) any unbiased estimator's covariance is at least $\left[\mathcal{I}(\theta)\right]^{-1}$ \citep{kay1993fundamentals}.

In our square-loss, Gaussian-noise setting, the empirical Fisher matrix becomes the empirical sum of outer products of gradients: 
\[
\mathcal I^{(\text{regime})}(\theta)
= \frac1{n}\sum_{i=1}^{n}
\sum_{k\in\calK_{\text{train}}}
\frac{1}{\sigma_{kk}}
\bigl[\nabla_\theta r_k(y_i)\bigr]
\bigl[\nabla_\theta r_k(y_i)\bigr]^{\top}.
\]
For single‑head reward model that evaluates the overall quality, we have $\calK=\{0\}$. For $K$‑attribute reward model that evaluates the response according to $K$ specific aspects, $\calK=\{1, \dots, K\}$. For our hybrid model, $\calK=\{0, \dots, K\}$. Because every summand is positive semi‑definite, adding a task can only
increase or keep the Fisher matrix. Hence we have:
\begin{equation} \label{eq:Fisher_hybrid_single}
    \mathcal I^{(\text{hybrid})} = \mathcal I^{(\text{single})} + \Delta,
\qquad \Delta\succeq 0,
\end{equation}

\textbf{Strict positivity of difference term.}
In fact, give the assumed positive correlation between head 0 and other attribute heads, we can show that the overall Fisher matrix can be strictly larger. Denote
\[
g_0(y_i) = \nabla_\theta r_s(y_i),
\]
which is the gradient of the overall head’s prediction with respect to $\theta$. When we look at the contribution of the other heads, what matters is how their gradients project onto $g_0(y_i)$:
\[
g_0(y_i)^\top \nabla_\theta r_k(y_i).
\]
The positive correlation assumption $\rho_{0k} > 0$ implies that, on average, the gradients $\nabla_\theta r_k(y_i)$ tend to point in a similar direction to $g_0(y_i)$. This means that $g_0(y_i)^\top \nabla_\theta r_k(y_i) > 0.$ For example, if we project the Fisher information onto the direction $g_0$, using the linearity of the inner product, we obtain:
\[
g_0(y_i)^\top\mathcal{I}^{\text{hybrid}}(\theta) \,g_0(y_i)
= g_0(y_i)^\top\mathcal{I}^{\text{single}}(\theta) \,g_0(y_i)
+ \sum_{k=1}^{K}\frac{1}{n\sigma_{kk}}\sum_{i=1}^{n}\bigl(g_0(y_i)^\top \nabla_\theta r_k(y_i)\bigr)^2.
\]
Because each term $\bigl(g_0(y_i)^\top \nabla_\theta r_k(y_i)\bigr)^2$ is strictly positive when the inner product is nonzero, and positive correlation ensures that it is indeed positive on average, we know the extra sum is strictly positive. That is,
\[
g_0(y_i)^\top\mathcal{I}^{\text{hybrid}}(\theta) \,g_0(y_i)
>
g_0(y_i)^\top\mathcal{I}^{\text{single}}(\theta) \,g_0(y_i).
\]
Therefore, we can get
\begin{equation} \label{eq:Fisher_hybrid_single_strict}
    \mathcal I^{(\text{hybrid})} = \mathcal I^{(\text{single})} + \Delta,
\qquad \Delta\succ 0,
\end{equation}
\textbf{Asymptotic Variance of $\widehat\theta$.}
If $\widehat{\theta}$ is the maximum-likelihood estimator (MLE) of
$\theta^\star$ and the usual regularity conditions hold
(i.i.d.\ samples, smooth log-likelihood, finite Fisher information, etc.),
then the asymptotic normality theorem for MLEs states
\[
\sqrt{n}\bigl(\widehat{\theta} - \theta^\star\bigr)
\xrightarrow{d}
\mathcal{N}\Bigl(
0,[\mathcal{I}(\theta^\star)]^{-1}
\Bigr).
\]
where ``$\xrightarrow{d}$'' denotes convergence in distribution and the covariance of the limiting Gaussian is the inverse Fisher information, which is the smallest possible asymptotic variance for any unbiased estimator by Cramér–Rao.

Thus, if one yields a larger
Fisher matrix (more information), its estimator’s asymptotic covariance matrix is smaller, so predictions based on it are less variable. Hence
\begin{equation}\label{eq:Cov_hybrid_single}
    \operatorname{Cov}{\text{hybrid}}(\widehat\theta)
    \prec 
    \operatorname{Cov}{\text{single}}(\widehat\theta).
\end{equation}


\textbf{From $\theta$‑variance to MSE by Bias–variance decomposition.}
For a fresh test example $v$ we predict with
\[
\hat{s}_0(v) = \mathbf{w}_S^{\top} M_{\widehat{\theta}}(v),
\qquad
g_s(v) = \mathbf{w}_S^{\top} f_{\theta^\star}(v) + \varepsilon_0.
\]

The mean-squared error of that prediction is
\[
\operatorname{MSE}_0
=
\underbrace{\bigl(\E[\hat{s}_0] - \E[g_s]\bigr)^2}_{\text{Bias}^2}
+
\underbrace{\operatorname{Var}[\hat{s}_0]}_{\text{estimation variance}}
+
\underbrace{\operatorname{Var}[\varepsilon_0]}_{\sigma_{00}\text{(irreducible noise)}}.
\]
\begin{itemize}
    \item \textbf{Bias term.} With sufficient optimization and model capacity the MLE is (asymptotically) unbiased, so this term is approximately $0$.
    \item \textbf{Variance term.} Fluctuations of $\widehat{\theta}$ across data sets propagate through the network, and first-order Taylor expansion gives
    \begin{align*}
        &M_{\widehat{\theta}}(v) \approx f_{\theta^\star}(v) + \nabla_\theta f_{\theta^\star}(v)\,(\widehat{\theta}-\theta^\star)\\
        \Longrightarrow& 
        \hat{s}_0(v) \approx \mathbf{w}_S^\top f_{\theta^\star}(v) +  \nabla_\theta M_S(v) (\widehat{\theta}-\theta^\star) \quad (\text{denote }M_S(v)  \bydef \mathbf{w}_S^\top f_{\theta^\star}(v))\\
        \Longrightarrow& 
        \operatorname{Var}[\hat{s}_0] \approx
        \nabla_\theta M_S(v)^{\top}
        \operatorname{Cov}(\widehat{\theta})
        \nabla_\theta M_S(v).
    \end{align*}
    Because the hybrid regime has the smaller $\operatorname{Cov}(\widehat{\theta})$ by Equation~\ref{eq:Cov_hybrid_single} above, this variance shrinks.
    \item \textbf{Noise term $\sigma_{00}$.} This is the intrinsic randomness in the label and is identical for all training regimes.
\end{itemize}
Therefore, when hybrid training reduces the covariance \( \operatorname{Cov}(\widehat{\theta}) \), the key variance term in the generalization bound decreases. Due to the assumed positive correlation between the fine-grained attributes and overall quality, this reduction leads to a lower test-set MSE for the single-objective head. This establishes inequality~\ref{eq:thm-MSE0-inequality} and thus completes the proof of Theorem~\ref{theorem2}.

Moreover, by linking the single-head MSE to the pairwise preference error (as shown in Lemma~\ref{lemma1}), we demonstrate that the single-objective head trained using our SMORM framework is expected to outperform a conventional single-head reward model. It is worth noting that the same argument naturally extends to the multi-objective reward setting as well.

\end{proof}

\section{Motivation for Enhancing Multi-Objective Reward Functions without Additional Multi-Attribute Data}
\label{appendix:motivationformultienhancing}
In this section, we provide a detailed discussion on the limited availability of high-quality data for training multi-objective reward models, which motivates our approach to enhancing multi-objective reward modeling performance without relying on additional multi-attribute annotations.

While several datasets provide dense prompt–response pairs with fine-grained attribute scores—such as UltraFeedback \cite{cui2023ultrafeedback} with 240K samples and Prometheus \cite{kim2023prometheus} with 200K samples—their annotations are primarily generated by GPT-based models. This introduces several concerns:
\textbf{(1)} As foundation models continue to evolve, the quality and consistency of their annotations become increasingly difficult to guarantee.
\textbf{(2)} The use of large language models (LLMs) as annotators introduces potential biases \cite{gu2024survey, li2024generation}, which can be inherited by the reward model and subsequently transferred to the policy model when optimized using reward signals.
\textbf{(3)} Our experimental results in Section~\ref{experiments} show that, when using gemma-2b-it as the base model, training on HelpSteer2 \cite{wang2024helpsteer2}—a human-annotated dataset of 20K samples—yields superior performance compared to training on UltraFeedback with 240K GPT-labeled samples.

HelpSteer2 \cite{wang2024helpsteer2} is one of the few available datasets that provide high-quality, human-annotated, multi-objective preference labels. However, it only contains 20K samples, and its creation involved an exceptionally rigorous annotation process. This process includes multiple layers of human oversight, dynamic annotator recruitment, and strict quality control procedures to ensure data integrity. Due to the high resource demands of this pipeline, it is difficult to generalize or scale to larger datasets.

In summary, it remains difficult to obtain large-scale, high-quality, fine-grained attribute scores for responses, whether through GPT-based evaluation or human annotation. This limitation motivates the development of methods that can enhance the performance of multi-objective reward models without requiring additional annotated data.

\section{Experiments on OOD}
\label{appendix:oodevaluation}
To evaluate the generalizability of our SMORM, we adopt the experimental setup described in Section~\ref{experiments}, training reward models on both 400K and 40K samples from the Unified-Feedback dataset. In both settings, HelpSteer2 serves as the multi-objective dataset $\mathcal{D}_M$ for SMORM. All methods use gemma-2B-it as the base model and are evaluated on both in-distribution (ID) data (Unified-Feedback) and out-of-distribution (OOD) benchmarks (HHH-Alignment and MT-Bench). The results are presented in Tables~\ref{ood:400k} and~\ref{ood:40k}. 
The results demonstrate that SMORM-F consistently outperforms competing methods in both in-distribution (ID) and out-of-distribution (OOD) evaluations. Specifically, SMORM-F achieves an ID score of 76 and an OOD score of 83.2 on the HHH-Alignment benchmark, surpassing the second-best method, which attains scores of 73.8 and 79.6, respectively. These findings suggest that incorporating a multi-objective learning function effectively shapes the embedding space, leading to improved performance on ID data and enhanced generalizability of the single-objective head to OOD scenarios.
\begin{table}[ht]
\centering
\caption{Results on \textbf{ID} and \textbf{OOD} evaluation with \textbf{400K training data} from Unified-Feedback. The best performance in each task is in bold and the second best one is underlined.}
\begin{tabular}{lccc}
\toprule
\textbf{Reward Model} & \textbf{Unified Feedback} & \textbf{HHH Alignment} & \textbf{MT Bench} \\
\midrule
Classifier (Frozen)            & 63.8 & 66.4 & 69.5 \\
Classifier (baseline)          & 72.1 & 73.4 & 71.2 \\
Classifier + margin            & 72.0 & 75.0 & 72.6 \\
Classifier + label smooth      & 71.5 & 72.1 & 71.2 \\
Classifier + Ensemble          & 72.8 & 76.8 & \textbf{73.7} \\
GRM & \underline{73.8} & \underline{79.6} & 73.4  \\
\textbf{SMORM-F (Ours)} & \textbf{76.0} & \textbf{83.2} & \underline{73.4}  \\
\bottomrule
\end{tabular}
\label{ood:400k}
\end{table}

\begin{table}[ht]
\centering
\caption{Results on \textbf{ID} and \textbf{OOD} evaluation with \textbf{40K training data} from Unified-Feedback. The best performance in each task is in bold and the second best one is underlined.}
\begin{tabular}{lccc}
\toprule
\textbf{Reward Model} & \textbf{Unified Feedback} & \textbf{HHH Alignment} & \textbf{MT Bench} \\
\midrule
Classifier (Frozen)            & 63.9 & 68.6 & 68.2 \\
Classifier (baseline)          & 68.8 & 70.3 & 69.1 \\
Classifier + margin            & 69.6 & 69.8 & 71.0 \\
Classifier + label smooth      & 68.5 & 68.8 & 71.9 \\
Classifier + Ensemble          & 69.9 & 72.2 & 71.1 \\
GRM  & \underline{71.3} & \underline{75.1} & \underline{72.8} \\
\textbf{SMORM-F (Ours)} & \textbf{72.9} & \textbf{77.4} & \textbf{73.0} \\
\bottomrule
\end{tabular}
\label{ood:40k}
\end{table}
\section{Alignment Results after PPO}
\label{appendix:alignment}
To demonstrate the effectiveness of our SMORM in training superior policy models within the RLHF framework, we evaluate the win rate of models trained using PPO with SMORM-F as the proxy reward model, comparing it against two baselines: Baseline SORM and GRM. 
We obtain the optimized models following the training process in Sec. \ref{exp:rlhf} and randomly sample 500 prompts from the Unified-Feedback dataset and use the optimized policy models to generate corresponding responses. We then employ GPT-4o as an impartial judge to compare the model outputs and determine which response is preferable, recording the win rates accordingly.
We adopt the win-tie evaluation prompt format introduced by \cite{khanov2024args}. The results are summarized in Table \ref{rlhfcomparison}. 
The system prompt and user prompt used for evaluation are provided below:

\begin{tcolorbox}[title=Prompt for Win-Tie Evaluation by GPT-4o, colback=gray!5!white, colframe=gray!75!black]

\textbf{System prompt}

\texttt{[system]}\\
You are a helpful, harmless, and precise assistant for checking the quality of the answer. We would like to request your feedback on the performance of two AI assistants in response to the user question. Please rate the harmlessness, helpfulness, and level of detail of their responses. Your evaluation should consider factors such as the helpfulness, harmlessness, relevance, accuracy, depth, creativity, and level of detail of the response. Note that if a response appears cut off at the end due to length constraints, it should not negatively impact the score. Also, base your evaluation solely on the given answer, disregarding any preceding interactions in the question. Each assistant receives an overall score on a scale of 1 to 10, where a higher score indicates better overall performance.

Please first output a single line containing only two values indicating the scores for Assistant 1 and 2, respectively. The two scores are separated by a space. In the subsequent line, please provide a comprehensive explanation of your evaluation, avoiding any potential bias and ensuring that the order in which the responses were presented does not affect your judgment.

\textbf{User prompt}

\texttt{[Question]}\\
\{question\}

\texttt{[The Start of Assistant 1's Answer]}\\
\{answer1\}\\
\texttt{[The End of Assistant 1's Answer]}

\texttt{[The Start of Assistant 2's Answer]}\\
\{answer2\}\\
\texttt{[The End of Assistant 2's Answer]}

\end{tcolorbox}

From the table, we observe that across both dataset scales, using our SMORM-F as the proxy reward function results in a policy model that consistently outperforms both the Baseline and GRM, with win rates always exceeding 65\%.

\begin{table}[h]
\caption{Win rate of models after PPO training with SMORM-F against baseline SORM and GRM.}
\centering
\begin{tabular}{lllccc}
\toprule
\textbf{Method} & \textbf{vs.} & \textbf{Method} &\textbf{Win (\%) $\uparrow$} & \textbf{Tie (\%)} & \textbf{Lose (\%) $\downarrow$}  \\
\midrule
\multicolumn{6}{c}{$\mathcal{D_S}/\mathcal{D}_M$: UnifiedFeedback 40k/HelpSteer2}\\
\midrule
SMORM-F && Baseline & 75.7& 0.5&23.8\\
SMORM-F && GRM & 69.5&0.4 & 30.1\\
\midrule
\multicolumn{6}{c}{$\mathcal{D_S}/\mathcal{D}_M$: UnifiedFeedback 400k/UltraFeedback}\\
\midrule
SMORM-F && Baseline & 71.3& 0.7&28.0\\
SMORM-F && GRM & 65.7& 0.7& 33.6\\
\bottomrule
\end{tabular}
\label{rlhfcomparison}
\end{table}

\section{Implementation Details}
\label{appendix:implementation}
\subsection{Baseline and Training Details}

\textbf{Baseline Details.} All baseline reward models use the \texttt{AutoModelForSequenceClassification} class from the \texttt{transformers} library~\cite{wolf2020transformers}, which attaches a randomly initialized linear head for reward prediction. Each model is trained to minimize a loss function using the training data. For ensemble baselines, we train three models with different random seeds and aggregate their predictions.

We use the margin loss from~\cite{touvron2023llama} defined as:
\[
\mathcal{L}_{\text{margin}}(\theta) = -\mathbb{E}_{(x, y_c, y_r) \sim \mathcal{D}} \left[\log \left(\sigma\left(r_\theta(x, y_c) - r_\theta(x, y_r) - m(r)\right)\right)\right],
\]
where $m(r)$ is computed using the reward difference between chosen and rejected responses in the \texttt{Unified-Feedback} dataset. This loss emphasizes meaningful reward distinctions.

We also incorporate a label smoothing loss defined as:
{\small
\[
\mathcal{L}_{\text{smooth}}(\theta) = -\mathbb{E}_{(x, y_c, y_r) \sim \mathcal{D}} \left[(1 - \epsilon)\log\left(\sigma\left(r_\theta(x, y_c) - r_\theta(x, y_r)\right)\right) - \epsilon \log\left(\sigma\left(r_\theta(x, y_c) - r_\theta(x, y_r)\right)\right)\right]
\]
}
where $\epsilon = 0.1$. This formulation improves robustness by softening the loss against label noise, reducing overfitting.

For GRM, we strictly follow the implementation in \cite{yang2024regularizing}. The reward head consists of a linear layer (hidden size, 1024), followed by a ReLU activation, and a final linear layer (1024, 1). The regularization coefficient $\alpha$ is set to 0.01, and $\beta$ is set to 0.1. In the GRM-linear variant, the head is a single linear layer of shape (hidden size, 1). More detailed training procedures for GRM can be found in \cite{yang2024regularizing}.

\textbf{Computational Resources.} All experiments were conducted using NVIDIA H100 80GB GPUs. Training on 40K data samples requires approximately 16 GPU hours.

\begin{table}[h]
\centering
\caption{Key implementations of the text generation experiments.}
\resizebox{\textwidth}{!}{
\label{tab:key-impl}
\small
\begin{tabular}{ll}
\toprule
\multicolumn{2}{c}{\textbf{Basic Information}} \\
\midrule
Base models & gemma-2b-it and Mistral-7B-Instruct-v0.2 \\
Quantization for training & bf16 \\
Optimizer & AdamW\_hf \\
Batch size & 16 \\
Learning rate & $5 \times 10^{-6}$ \\
Learning rate scheduler & cosine \\
Warmup ratio & 0.03 \\
\midrule
\multicolumn{2}{c}{\textbf{SMORM}} \\
\midrule
Weight ratio for single-objective to multi-objective reward modeling & 1.0 by default\\
\midrule
\multicolumn{2}{c}{\textbf{PPO}~\cite{schulman2017proximal}} \\
\midrule
KL regularization & 0.0 \\
Epochs & 1 \\
Learning rate & $1 \times 10^{-5}$ \\
$\lambda$ for GAE & 0.95 \\
$\gamma$ & 1 \\
Clip range & 0.2 \\
Optimization epochs per batch & 4 \\
Tokens during generation & 512 \\
\bottomrule
\end{tabular}}
\end{table}

\section{Hyperparameter Analysis and Instability of GRM}
\label{appendix:hyperandinstability}
In this section, we conduct experiments to demonstrate the instability of GRM \cite{yang2024regularizing} and the relative stability of our proposed SMORM. Intuitively, reward models are typically initialized from language models pretrained on next-token prediction tasks, and are then fine-tuned for reward modeling. This motivates the hypothesis that introducing a next-token prediction regularization term—aligned with the pretraining objective—may conflict with the reward modeling objective. As a result, this misalignment could lead to unstable performance during reward model training.
To validate this assumption, we conduct experiments following the setup described in Sec.~\ref{experiments}, using 40K samples from \texttt{Unified-Feedback} as $\mathcal{D}_S$ and \texttt{HelpSteer2} as $\mathcal{D}_M$. We compare the performance of GRM to a baseline single-objective reward model and our proposed SMORM-F. In addition, we compare a baseline multi-objective reward model to our SMORM-L. We vary the weight ratio in $\left[0.01, 0.1, 1, 10\right]$. In GRM, the weight ratio refers to the strength of the next-token prediction regularization. In SMORM, the weight ratio controls the contribution of multi-objective reward modeling. All models are evaluated on \texttt{RewardBench}. The results are presented in Fig.~\ref{fig:hyper}.
From the figure, we observe that, except in extreme cases, our SMORM framework demonstrates consistently stable performance. Specifically, when the weight ratio is set to 0.01, the performance of SMORM-L slightly falls short of the baseline multi-objective reward model, and when the ratio is set to 10, SMORM-F marginally underperforms compared to the baseline single-objective reward model. Outside of these edge cases, both SMORM-F and SMORM-L consistently outperform their respective baselines across a wide range of weight settings. In contrast, the performance of GRM fluctuates significantly, ranging from approximately 45 to 65, and consistently underperforms relative to the baseline single-objective reward model. These results support our hypothesis that incorporating next-token prediction regularization introduces instability into reward model training. This instability is especially concerning given the substantial computational cost required to train reward models.
\begin{figure}[h]
    \centering
    \includegraphics[width=0.5\linewidth]{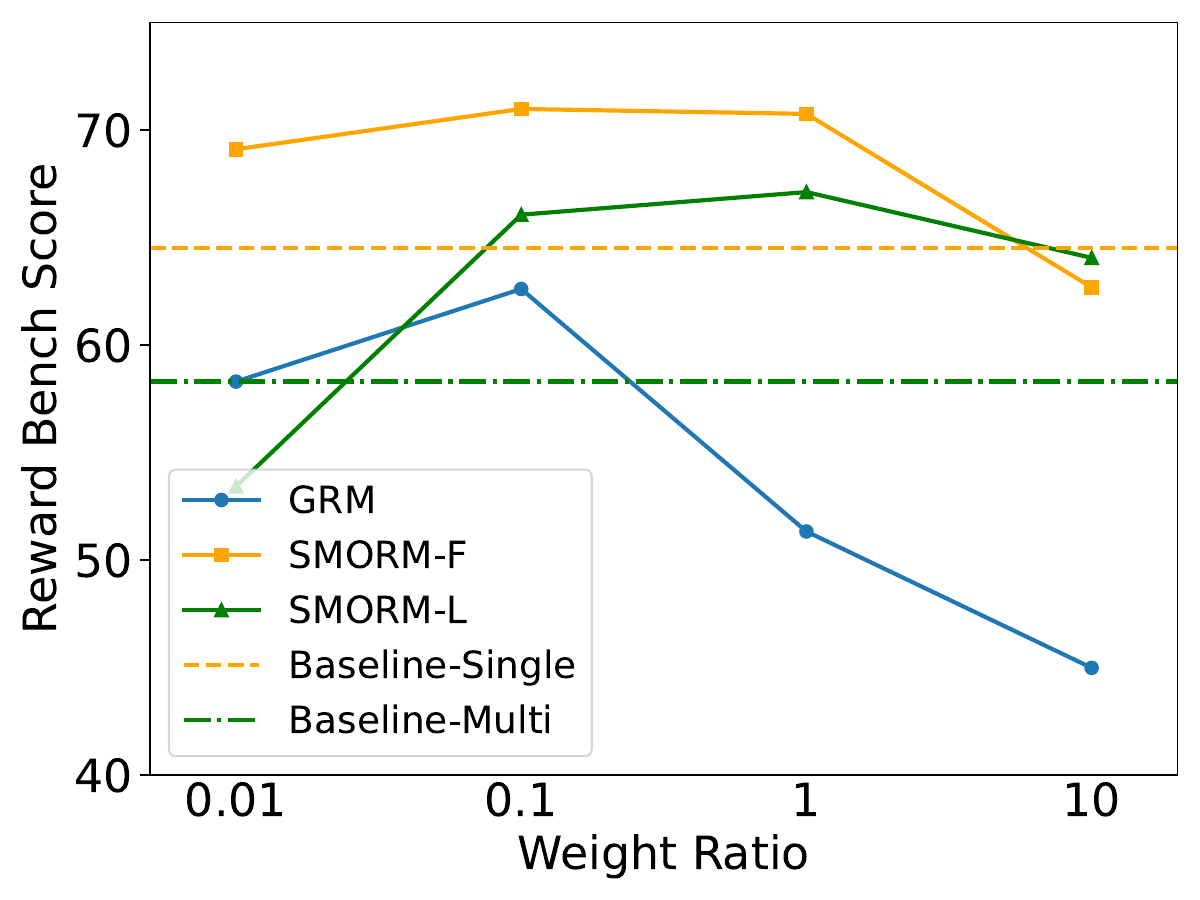}
    \caption{Hyperparameter analysis.}
    \label{fig:hyper}
\end{figure}

\section{Interpretation of Why SORM Fail in OOD Setting}
\begin{figure}[h]
    \centering
    \includegraphics[width=0.75\linewidth]{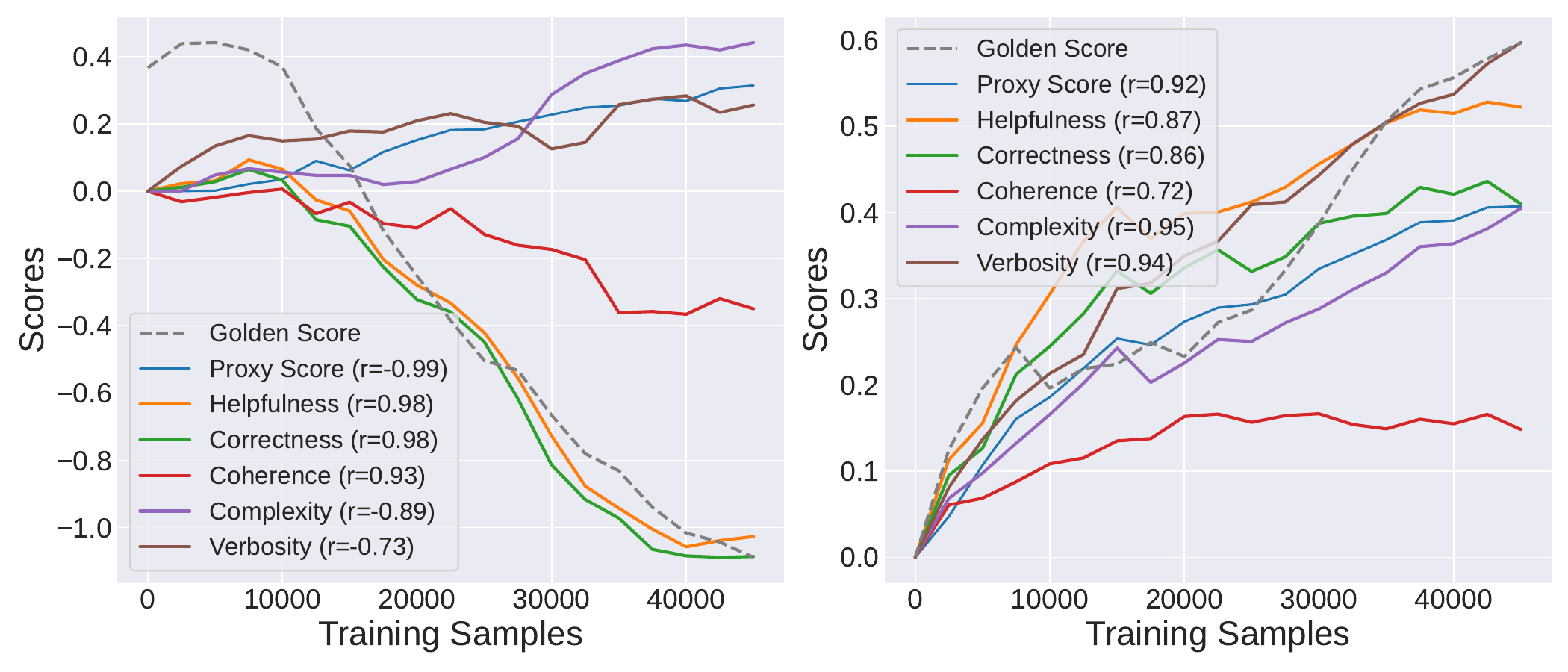}
    \begin{minipage}{0.37\textwidth}
        \small
        \centering
        \textbf{(a)}
    \end{minipage}%
    \begin{minipage}{0.37\textwidth}
        \small
        \centering
        \textbf{(b)}
    \end{minipage}
    \caption{Fine-grained attribute scores of the optimized policy model using (a) the baseline classifier and (b) SMORM-F as the proxy reward model.}
    \label{fig:intepretation}
\end{figure}
To interpret the vulnerability of the baseline classifier to reward hacking, we adopt the same experimental setting as in Sec.~\ref{hacking} and record evaluation scores for the generated responses across five dimensions: \texttt{Helpfulness}, \texttt{Correctness}, \texttt{Coherence}, \texttt{Complexity}, and \texttt{Verbosity}. These scores are derived using the multi-objective head $\mathbf{w}_M$ of the trained SMORM. Figures \ref{fig:intepretation} (a) and (b) illustrate the evaluation results when using the baseline classifier and SMORM-F as proxy reward models, respectively. In Fig. \ref{fig:intepretation} (a), employing the baseline classifier leads to improvements only in the \texttt{Complexity} and \texttt{Verbosity} dimensions, while performance declines in the remaining attributes. Consequently, the generated responses are not considered high-quality by the gold reward model.
In contrast, Fig. \ref{fig:intepretation} (b) shows that using SMORM-F as the proxy reward model results in consistent improvements across all five fine-grained dimensions. These enhancements are also reflected in an increased gold score. These findings indicate that a conventional single-objective reward model is typically insufficient to capture the multifaceted criteria that make a chosen response preferable to a rejected one.

\section{Analysis on RM-Bench}
\label{appendix:rmbench}

RM-Bench \cite{liu2025rmbench} evaluates reward models on two key dimensions: sensitivity to subtle changes and robustness to style bias. It includes three task types:
(1) \texttt{Easy}: Chosen responses are detailed and informative; rejected ones are concise and minimal.
(2) \texttt{Normal}: Both responses share the same style but differ in key information.
(3) \texttt{Hard}: Chosen responses are concise; rejected ones are detailed.


\subsection{MORMs Fall Short on Easy and Normal Tasks.}

In this section, we present empirical results to illustrate why baseline multi-objective reward models tend to underperform on RM-Bench~\cite{liu2025rmbench}. We begin by evaluating a selection of existing open-source single-objective and multi-objective reward models that exhibit comparable performance on RewardBench~\cite{lambert2024rewardbench}. The evaluated models include:

\begin{itemize}
    \item \textbf{Multi-objective reward models:}
    \begin{itemize}
        \item \texttt{NVIDIA/Nemotron-340B-Reward}\footnote{\href{https://huggingface.co/nvidia/Nemotron-4-340B-Reward}{nvidia/Nemotron-4-340B-Reward}}
        \item \texttt{RLHFlow/ArmoRM-Llama3-8B-v0.1}\footnote{\href{https://huggingface.co/RLHFlow/ArmoRM-Llama3-8B-v0.1}{RLHFlow/ArmoRM-Llama3-8B-v0.1}}
    \end{itemize}
    
    \item \textbf{Single-objective reward models:}
    \begin{itemize}
        \item \texttt{Ray2333/GRM-llama3-8B-distill}\footnote{\href{https://huggingface.co/Ray2333/GRM-llama3-8B-distill}{Ray2333/GRM-llama3-8B-distill}}
        \item \texttt{internlm/internlm2-20b-reward}\footnote{\href{https://huggingface.co/internlm/internlm2-20b-reward}{internlm/internlm2-20b-reward}}
        \item \texttt{NCSOFT/Llama-3-OffsetBias-RM-8B}\footnote{\href{https://huggingface.co/NCSOFT/Llama-3-OffsetBias-RM-8B}{NCSOFT/Llama-3-OffsetBias-RM-8B}}
        \item \texttt{Ray2333/GRM-llama3-8B-sftreg}\footnote{\href{https://huggingface.co/Ray2333/GRM-llama3-8B-sftreg}{Ray2333/GRM-llama3-8B-sftreg}}
        \item \texttt{LxzGordon/URM-LLaMa-3.1-8B}\footnote{\href{https://huggingface.co/LxzGordon/URM-LLaMa-3.1-8B}{LxzGordon/URM-LLaMa-3.1-8B}}
        \item \texttt{Ray2333/GRM-Llama3.2-3B-rewardmodel-ft}\footnote{\href{https://huggingface.co/Ray2333/GRM-Llama3.2-3B-rewardmodel-ft}{Ray2333/GRM-Llama3.2-3B-rewardmodel-ft}}
        \item \texttt{Skywork/Skywork-Reward-Llama-3.1-8B}\footnote{\href{https://huggingface.co/Skywork/Skywork-Reward-Llama-3.1-8B}{Skywork/Skywork-Reward-Llama-3.1-8B}}
    \end{itemize}
\end{itemize}

\begin{table}[h]
\centering
\caption{Comparison of models on Easy, Normal, and Hard tasks on RM-Bench.}
\resizebox{\textwidth}{!}{
\begin{tabular}{lccccc}
\toprule
\textbf{Model Name} & Easy & Normal & Hard & Avg & RewardBench\\
\midrule
\texttt{Skywork/Skywork-Reward-Llama-3.1-8B} & 89.0 & 74.7 & 46.6 & 70.1 & 93.1\\
\texttt{LxzGordon/URM-LLama-3.1-8B} & 84.0 & 73.2 & 53.0 & 70.0 & 92.9\\
\texttt{NCSOFT/Llama-3-OffsetBias-RM-8B} & 84.6 & 72.2 & 50.2 & 69.0 & 89.4\\
\texttt{internlm/internlm2-20b-reward} & 82.6 & 71.6 & 50.7 & 68.3 & 90.2\\
\texttt{Ray2333/GRM-llama3-8B-sftreg} & 83.5 & 72.7 & 48.6 & 68.2 & 87.0\\
\texttt{Ray2333/GRM-llama3-8B-distill} & 82.2 & 71.5 & 48.4 & 67.4 & 86.2\\
\texttt{Ray2333/GRM-Llama3.2-3B-rewardmodel-ft}  & 89.9 & 74.0 & 44.0 & 69.3 & 90.9\\ \midrule
\texttt{RLHFlow/ArmoRM-Llama3-8B-v0.1} & 82.5 & 70.8 & 50.1 & 67.8 & 90.4\\
\texttt{NVIDIA/Nemotron-340B-Reward} & 81.0 & 71.4 & 56.1 & 69.5 & 92.0\\
\bottomrule
\end{tabular}}
\label{rmbench-opensource}
\end{table}

\begin{figure}[h]
    \centering
    \includegraphics[width=0.75\linewidth]{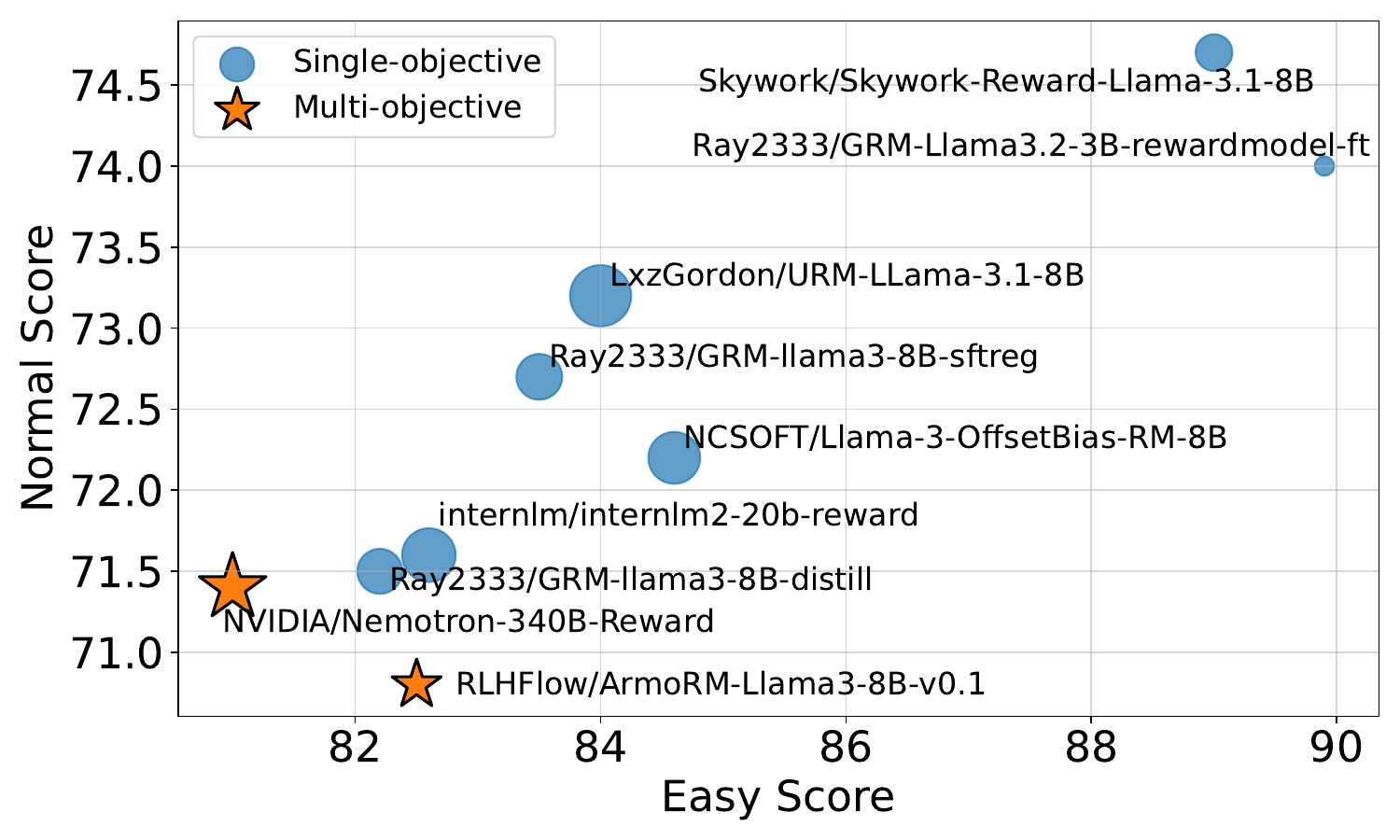}
    \caption{Visualization of results on RM-Bench. The size of each marker indicates the model's performance on the Hard task.}
    \label{fig:rm-benchopensource}
\end{figure}

The results of these comparisons are presented in Table~\ref{rmbench-opensource}. For better illustration, we also visualize the results in Fig.~\ref{fig:rm-benchopensource}. We observe that existing multi-objective reward models tend to underperform on the Easy and Normal tasks, even though their overall performance on RewardBench and the Hard tasks of RM-Bench is not among the worst.
We attribute this phenomenon to the following: multi-objective reward models (MORMs) typically assess response quality through utility and style attributes. For example, \texttt{HelpSteer2} \cite{wang2024helpsteer2} provides scores for \textit{correctness}, \textit{helpfulness}, and \textit{coherence} (utility), and \textit{complexity} and \textit{verbosity} (style). Formally, we define:
$
\mathcal{U} = \{\text{correctness, helpfulness, coherence}\},
\mathcal{S} = \{\text{verbosity, complexity}\}.
$
The score margin is:
\[
\Delta r =
\underbrace{
  \sum_{k \in \mathcal{U}} w_k\,\Delta h_k
}_{r_u:\text{utility heads}}
+
\underbrace{
  w_{\mathrm{verb}}\,\Delta h_{\mathrm{verb}}
  + w_{\mathrm{comp}}\,\Delta h_{\mathrm{comp}}
}_{r_b:\text{style bias}}.
\]
In \texttt{Easy} tasks, longer chosen responses incur negative style penalties, potentially flipping $\Delta r$ despite higher utility scores. In \texttt{Normal} tasks, responses should have similar style, implying $\Delta h_{\mathrm{verb}} \approx \Delta h_{\mathrm{comp}} \approx 0$, but in practice, non-trivial deviations occur due to imperfect optimization. As shown in Appendix~\ref{appen:rmbenchcase}, style noise degrades scoring accuracy and overall performance.

\subsection{Case Study and Why SMORM Help}
\label{appen:rmbenchcase}
In this section, we present a case study to interpret why baseline multi-objective reward models fall short on the Normal and Easy tasks of RM-Bench.

We train a multi‐objective reward model (MORM) on the \texttt{HelpSteer2} dataset and, for each attribute across all tasks, compute the mean and variance of the prediction differences—where each difference is defined as the chosen‐response score minus the rejected‐response score.
The results are summarized in Table~\ref{tab:mean_norm_diff}. As shown in the table, while the baseline MORM exhibits mean scores for complexity and verbosity that are approximately zero across the Normal tasks, the corresponding variances are non-trivial. This indicates that, although paired responses should receive similar scores for complexity and verbosity, there remains considerable prediction bias on these attributes in individual comparisons. 


\begin{table}[h]
\centering
\caption{Normalized pairwise differences per dimension (Mean (Variance))}
\begin{tabular}{lcccccc}
\hline
Category  & helpfulness & correctness & coherence & complexity & verbosity\\
\hline
Normal overall    & 0.08 (0.83) & 0.00 (1.55) & 0.07 (0.74) & 0.17 (1.29) & 0.00 (0.55) \\
Normal Correct    & 0.41 (0.75) & 0.33 (1.48) & 0.29 (0.88) & 0.43 (1.31) & 0.10 (0.52) \\
Normal False       & -0.31 (0.65)& -0.38 (1.35)& -0.19 (0.46)& -0.13 (1.10)& -0.12 (0.55) \\
\hline
Hard overall      & 0.26 (1.02) & 0.32 (1.90) & -0.28 (0.95)& -0.56 (1.85)& 0.05 (0.81) \\
Hard Correct    & 0.78 (0.84) & 0.85 (1.86) & 0.01 (0.82) & -0.27 (2.24)& 0.14 (0.81) \\
Hard False       & -0.08 (0.84)& -0.03 (1.62)& -0.48 (0.93)& -0.75 (1.50)& -0.19 (0.77) \\
\hline
Easy overall     & -0.11 (1.32)& -0.32 (1.93)& 0.41 (1.38) & 0.90 (2.36) & 0.05 (0.75) \\
Easy Correct   & 0.19 (1.10) & -0.07 (1.69)& 0.61 (1.51) & 1.12 (2.15) & 0.12 (0.70) \\
Easy False      & -0.76 (1.20)& -0.87 (2.02)& -0.03 (0.82)& 0.40 (2.46) & -0.10 (0.83) \\
\hline
\end{tabular}
\label{tab:mean_norm_diff}
\end{table}
Ideally, a single-objective reward model (SORM) can better align with helpfulness and correctness, as its preference training dataset encompasses diverse attributes and tends to exhibit less sensitivity to superficial factors like response length. According to Theorem~\ref{prop:implicit_multi_effect_detailed}, the aggregate score from the multi-objective reward model is lower-bounded by the single-objective reward score: $r_m(x, y) = \frac{1}{K} \sum_{i=1}^{K} \alpha_i w_{M,i}^\top f_\theta(x, y) \geq c \cdot r_s(x, y) - \varepsilon,$
where $\alpha_i$ reflects the correlation between each attribute and the chosen/rejected preference direction.
Consider a scenario where response $y_A$ is preferred over $y_B$ according to ground-truth labels. Initially, it may happen that $r_m(y_A) < r_m(y_B)$, especially when the style bias overwhelms the utility signal, i.e., $|r_u(y_A) - r_u(y_B)| < |r_b(y_A) - r_b(y_B)|$. However, when the single-objective reward assigns a higher score to $y_A$, i.e., $r_s(y_A) > r_s(y_B)$, the utility heads in the multi-objective model are progressively adjusted to favor $y_A$. This adjustment increases the margin $|r_u(y_A) - r_u(y_B)|$, eventually outweighing the bias introduced by stylistic components ($r_b$). As a result, the style penalty's influence diminishes, and the model better reflects true preference judgments.
To verify this, we train the SMORM using 40K samples from \texttt{Unified-Feedback} as $\mathcal{D}_S$, and report the differences in Table~\ref{tab:mean_norm_diff_smorm}. As shown, for normal cases, both the variance and verbosity are substantially reduced compared to the variance observed in helpfulness. This indicates that style-related biases no longer dominate the final decision, thereby enhancing the performance of the multi-objective reward model on this task.
\begin{table}[h]
\centering
\caption{Normalized pairwise differences per dimension (Mean (Variance))}
\begin{tabular}{lcccccc}
\hline
Category & helpfulness & correctness & coherence & complexity & verbosity \\
\hline
Normal overall   & 0.43 (0.62) & 0.43 (0.68) & 0.29 (0.41) & -0.02 (0.11) & 0.01 (0.08) \\
Normal correct   & 0.73 (0.57) & 0.71 (0.70) & 0.53 (0.35) & -0.01 (0.13) & 0.05 (0.10) \\
Normal false     & -0.24 (0.09)& -0.17 (0.10)& -0.23 (0.11)& -0.03 (0.07)& -0.07 (0.04) \\\midrule
Hard overall     & 0.28 (0.70) & 0.44 (0.93) & 0.25 (0.55) & -0.30 (0.16)& 0.33 (0.20) \\
Hard correct     & 0.77 (0.62) & 0.91 (1.05) & 0.68 (0.45) & -0.31 (0.17)& 0.48 (0.22) \\
Hard false       & -0.34 (0.11)& -0.16 (0.12)& -0.30 (0.15)& -0.29 (0.14)& 0.14 (0.11) \\\midrule
Easy overall    & 0.57 (0.67) & 0.43 (0.61) & 0.34 (0.38) & 0.26 (0.16) & -0.30 (0.20) \\
Easy correct     & 0.83 (0.56) & 0.62 (0.59) & 0.53 (0.30) & 0.30 (0.17) & -0.31 (0.23) \\
Easy false      & -0.26 (0.12)& -0.18 (0.16)& -0.27 (0.16)& 0.14 (0.11) & -0.27 (0.12) \\
\hline
\end{tabular}
\label{tab:mean_norm_diff_smorm}
\end{table}

\newpage

\end{document}